\DeclareMathOperator*{\argmin}{arg\,min}
\DeclareMathOperator{\E}{\mathbb{E}}
\newtheorem{definition}{Definition}
\newtheorem{proposition}{Proposition}
\newtheorem{corollary}{Corollary}
\newtheorem{lemma}{Lemma}
\theoremstyle{remark}
\newtheorem{remark}{Remark}
\title{Uncertainty Characteristics Curves: A Systematic Assessment of Prediction Intervals}
\author{
Ji\v{r}\'\i\, Navr\'atil\\
\texttt{jiri@us.ibm.com}\\
IBM Research
\And
Benjamin Elder\\
\texttt{benjamin.elder@ibm.com}\\
IBM Research
\And
Matthew Arnold\\
\texttt{marnold@us.ibm.com}\\
IBM Research
\And
Soumya Ghosh\\
\texttt{ghoshso@us.ibm.com}\\
IBM Research
\And
Prasanna Sattigeri\\
\texttt{psattig@us.ibm.com}\\
IBM Research
}
\begin{document}

\maketitle

\begin{abstract}
Accurate quantification of model uncertainty has long been recognized as a fundamental requirement for trusted AI. In regression tasks, 
uncertainty is typically quantified using prediction intervals calibrated to a specific operating point, making evaluation and comparison across different studies difficult.
Our work leverages: (1) the concept of operating characteristics curves and (2) the notion of a gain over a simple reference, to derive a novel operating point agnostic assessment methodology for prediction intervals.
The paper describes the corresponding algorithm, provides a theoretical analysis, and demonstrates its utility in multiple scenarios. We argue that the proposed method addresses the current need for comprehensive assessment of prediction intervals and thus represents a valuable addition to the uncertainty quantification toolbox.   
\end{abstract}

\section{Introduction}\label{Sec:Intro}

The ability to quantify the uncertainty of a model is one of the fundamental requirements in trusted, safe, and actionable AI \cite{Arnold2019_factsheets, Jiang2018_trust, Begoli2019}. 
Numerous methods of generating uncertainty bounds (referred to as prediction intervals, or error bounds) 
have been proposed in statistics and machine learning literature. 

Evaluating the quality of uncertainty bounds, however, remains challenging. While metrics such as the measure of coverage and the likelihood are popular, they conflate the quality of the prediction intervals (PI) with the difficulty of the predictive task at hand\footnote{We give an example of such conflation in the Appendix}. These metrics may be too optimistic (pessimistic) for trivial (challenging) datasets with an easily predictable (hard to quantify) uncertainty and can lead to misleading conclusions. Moreover, tools to compare multiple models producing bounds across different \emph{operating points} (OP) are scarce. We contend that to measure the quality of uncertainty bounds reliably, we need tools that are both OP and dataset independent. 

{\bf OP independence.}
The importance of OP-independent (OP-comprehensive) evaluation metrics is well understood, as demonstrated by techniques such as ROC curves \cite{Fawcett06} becoming standard practice in numerous areas of AI involving 
classification, detection, and other tasks. 
In the context of PI, we define the term OP loosely as a specific setting producing a certain value of either mean coverage or bandwidth (a formal definition will be given in Section \ref{Sec:UCC}). 
However, most evaluations of prediction intervals are still performed in a OP-dependent way, evaluating the quality at one (or a small number of) operating point(s).

{\bf Dataset independence}
When possible, metrics should capture the effectiveness of the technique being evaluated, rather than characterize the underlying dataset used in the evaluation.  Techniques that contextualize a measurement by comparing it to an intuitive reference 
offer themselves to address this challenge (for instance, the well-known coefficient of determination, or $R^2$, in statistics \cite{Steel1960} expresses 
an observed variance relative to an overall variance to be explained, thus yielding a measure comparable across
datasets as well as algorithms). 

This paper proposes a methodology for evaluating prediction intervals that addresses both of these issues.  First, we introduce the Uncertainty Characteristics Curve (UCC), which leverages the well known concepts of operating characteristic curves to enable OP-independent evaluation. Second, we introduce the notion of a {\em gain} over a simple reference, which more intuitively captures the quality of a prediction interval and makes it comparable across different models.  These two approaches can be combined to produce a novel metric that we believe is a valuable addition to the prediction interval assessment toolbox. 

The contributions of this paper are summarized as follows: 
\begin{itemize}
    \item Developing the Uncertainty Characteristics Curve (UCC) as a tool to comprehensively assess the quality of prediction intervals, whether Bayesian or frequentist in nature.
    \item Providing a probabilistic interpretation of the area under the UCC.
    \item Proposing a gain metric that allows for a cross-model comparison.
    \item Releasing the corresponding code to the research community.
\end{itemize}

\begin{figure}[htbp!]
       \centering
       \includegraphics[height=3.5cm, width=8cm]{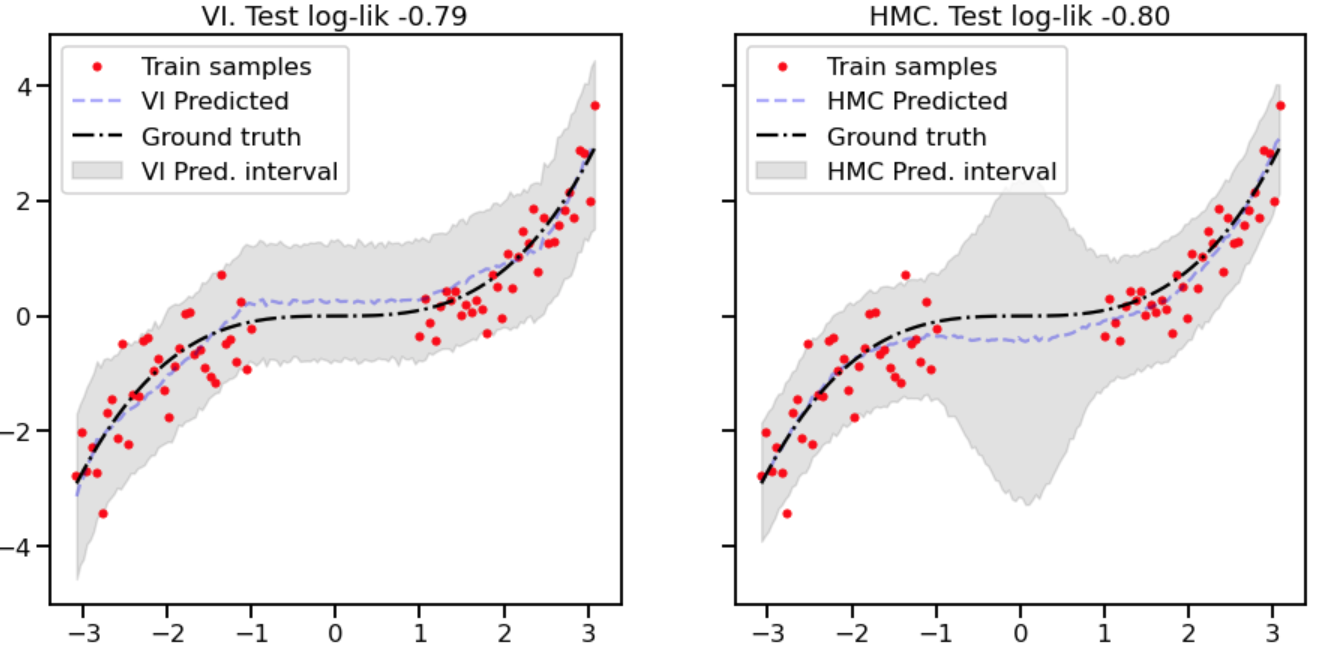}
       \caption{Bayesian Neural Networks applied to a synthetic regression task: Variational Inference (VI) on the left and Hamiltonian Monte Carlo (HMC) on the right. The HMC obtains a lower likelihood. }
       \label{Fig:IntroGapExample}
\end{figure}

We want to briefly introduce an illustrative example to underscore the complexities in prediction interval assessment. Figure \ref{Fig:IntroGapExample} shows an application of a Bayesian Neural Network (BNN) 
to a one-dimensional synthetic regression task involving  
a data gap in the region $[-1,1]$, similar to \cite{yao2019quality}. 
The example uses two inference variants to approximate the intractable BNN posterior: a computationally efficient black-box variational inference (VI) and a Hamiltoninan Monte Carlo (HMC) method - largely considered the gold standard for BNN inference\cite{neal2011mcmc,Foong2020}. A significant widening in the uncertainty occurs with the HMC in the data-gap region, which appeals to our intuition. However, when assessing the quality using the log-likelihood on test data, the VI scores higher than the HMC - 
an example of how likelihood may be an inconclusive or a misleading indicator. 
We will return to this example in more detail later on and show that our proposed metric 
points clearly in favor of the HMC variant.

\section{Method}

\subsection{Basic Metrics}
\label{Sec:Metrics}
Suppose there are two components of a model, one generating target predictions, the other assigning uncertainty bounds associated with each such target. Let $V=[Y, \hat{Y}, \hat{Y}^l, \hat{Y}^u]^T\in\mathbb{R}^4$ 
denote a multivariate random variable comprising the ground truth, $Y$, the target prediction, $\hat{Y}$,
the lower uncertainty bound, $\hat{Y}^l=\hat{Y}-\hat{Z}^l$, and the upper uncertainty 
bound, $\hat{Y}^u=\hat{Y}+\hat{Z}^u$.
The variables $\hat{Z}^l, \hat{Z}^u$ refer to the predicted uncertainty {\em bands}, i.e., the positive deviates from $\hat{Y}$. For simplicity we assume the task involves one-dimensional output, however, all subsequent explanations  
readily generalize to a multi-dimensional setting. 
Let $p_V$ denote the probability density of $V$ and $\mathbf{v}=\{v_1, ..., v_N\}$ a set of $N$ samples from $p_V$, where $v_i=[y_i, \hat{y}_i, \hat{y}_i^l, \hat{y}_i^u]^T$. 

We need to consider two cost aspects arising in the assessment of prediction intervals, roughly speaking:  (1) what is the extent of observations falling outside the uncertainty bounds (miss rate), and (2) the width of the bounds. An optimal bound captures all of the ground truth (or a calibrated fraction thereof) while being least excessive in terms of its average bandwidth.
We define the following metrics as expectations (and their corresponding empirical estimates):
\vspace{-.1cm}
\begin{equation}
    \mbox{Miss rate: }\qquad\rho(V)=\E_{p_V}\left[\mathbf{1}_{Y\notin[\hat{Y}^l, \hat{Y}^u]}\right]\,,
    \qquad \hat{\rho}(\mathbf{v}) =
    \frac{1}{N}\sum_{i:y_{i}\notin[\hat{y}_{i}^l, \hat{y}_{i}^u]} 1 \label{Eq:Missrate}
\end{equation}
\vspace{-.2cm}
\begin{equation}
    \mbox{Bandwidth: }\qquad\beta(V)= \frac{1}{2}\E_{p_V}\left[ \hat{Y}^u-\hat{Y}^l \right]\,,
    \qquad \hat{\beta}(\mathbf{v}) =
    \frac{1}{2N}\sum_{i=1}^{N} \hat{y}_{i}^u-\hat{y}_{i}^l \label{Eq:Bandwidth}
\end{equation}
Note that $\rho(V)$ is the complement of the well-known Prediction Interval Coverage Probability (PICP) and the bandwidth is half of the Mean Prediction Interval Width (MPIW).
In Appendix \ref{App:Sec:AddMetrics}, we define two additional metrics also suitable for practical applications. In favor of coordinate axes consistency, all metrics are defined as costs (hence our choice of Miss Rate 
over the PICP). 

In case the variables $Y, \hat{Y}, \hat{Z}^l$, and $\hat{Z}^u$ are multi-dimensional, the above metrics may be calculated as averages over the individual dimensions.

\subsection{Relative Gain}\label{Sec:RelativeGain}

While we want to assess the quality of the {\em uncertainty}, most 
standard metrics (e.g., likelihood, bandwidth, etc.) compound  
uncertainty bands with actual target predictions. As a consequence, these metrics, in their raw form, 
are not 
comparable across different predictors (an example of such confounding is the log-likelihood loss, see Appendix). To address this issue, we 
look for a relative gain of the PI metrics over some simple, intuitive reference. A good choice of such a reference 
are constant bands, i.e. $\forall i:\hat{z}_i^l=const, \hat{z}_i^u=const$. Given target predictions, 
$\hat{y_i}$, such a non-informative (null) baseline is always possible to generate. 

Figure \ref{Fig:GainExample} illustrates the issues in comparing raw metrics. 
\begin{figure*}
      \centering
      \includegraphics[height=4.5cm]{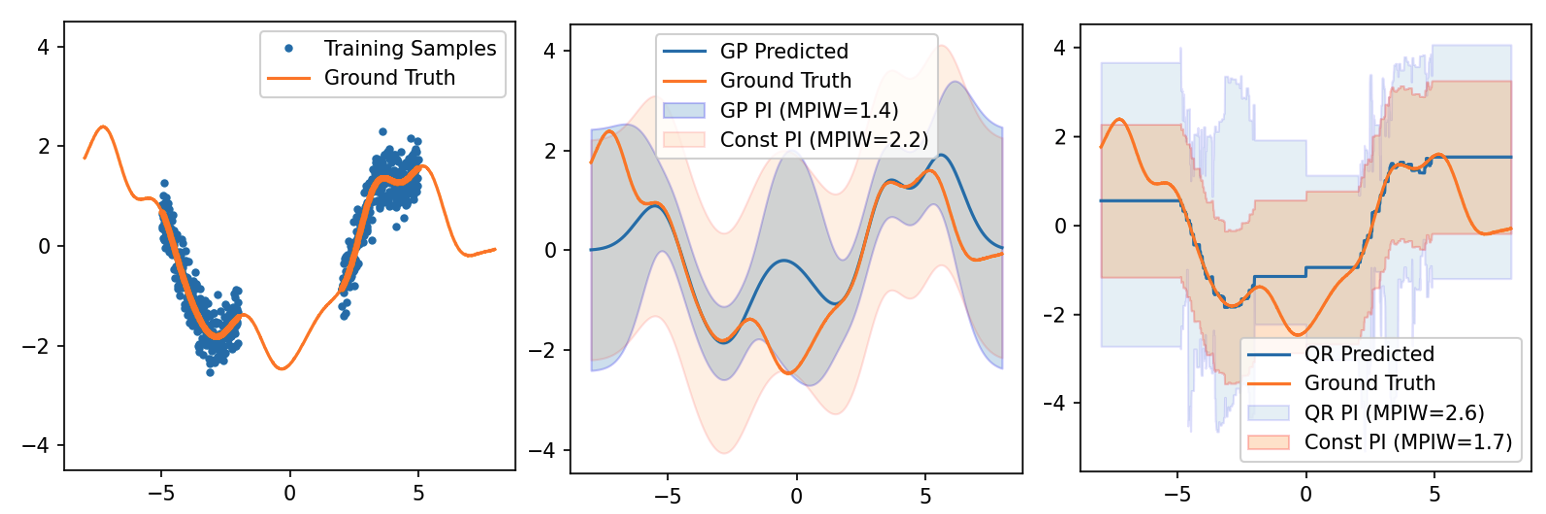}
      \caption{Example of a Gaussian Process (GP, center) and a Gradient Boosting Tree Quantile Regressor (QR, right) generating Prediction Intervals (PI) on a synthetic task (Left), along with constant-band
      references. All PIs are calibrated to a Miss Rate 5\% and the corresponding mean bandwidth values are also shown (MPIW).} 
      \label{Fig:GainExample}
\end{figure*}
In this example, the ground truth (GT) is generated by randomly drawing from a zero-mean Gaussian Process (GP) with an RBF kernel, $f(x)\sim GP(\mathbf{0},K(x,x^T)$. The training set is obtained as a noisy sample $y(x)\sim \mathcal{N}(f(x), 0.1)$ in the regions shown in the left plot of Figure \ref{Fig:GainExample}. 
Two models are fit to the data: a GP regressor (center) and a Gradient Boosted Tree (GBR) with Quantile Regression loss (QR, right). 
The GP and QR PIs are shown as blue-shaded areas while 
the light-shaded areas correspond to the reference constant bands spanned around the target predictions. 
All PIs are calibrated to incur a miss rate of 5\%. Several observations can be made: 
(1) The average bandwidth (MPIW) of the GP (1.4) is lower than that of the QR (2.6). 
(2) The constant-band MPIW, albeit at the same miss rate, differs across the cases (2.2 for the GP vs. 1.7 for the QR). 
(3) The GP PI are 36\% better than the reference and the QR PI are 53\% worse than their reference.  
The last observation exercises the relative notion (gain): the GP PI are clearly superior 
as they capture the GT far more efficiently than the constant band. In contrast, the QR predictor 
generates PI that seem to be inefficient, compared to the simple reference. 
While the individual MPIW values are not comparable (due to the different predictors involved), we can 
use the relative gain to rank the quality of the PI. A method achieving a high gain is obviously valuable. 
A method producing zero gain may not be necessarily faulty but the task at hand may just be homoskedastic 
in nature, suggesting to revise our model correspondingly. Finally, negative gains indicate an inferior model.  

One essential question remains open: 
How will the gain assessment change across different OPs (e.g. high, medium, low miss rate)? 
The answer lies at the foundation of the tool described next. 

{\subsection{The Uncertainty Characteristics Curve}
\label{Sec:UCC}}

\paragraph{Scaling}\label{Sec:Scaling}

In general, the goal of a calibration step is to transform the bounds such that a certain proportion of future observations, in expectation, fall within these bounds. 
Numerous calibration techniques have been developed in the context 
of regression, e.g., \cite{KuleshovAccurateUncertainties, SongDistributionCalibration2019}. Our considerations rely on an additive-error model from which scaling emerges as fundamental: We assume the ground truth $Y$ distributes as $Y\sim F(\hat{Y}, \hat{Z})$ where $F$ belongs to the {\em location-scale} family \cite{Rinne2011}, $\hat{Y}$ is an unbiased estimate of its mean, and $\hat{Z}>0$ its scale. Then
\begin{equation}
    Y=\hat{Y}+\hat{Z}\epsilon \label{Eq:linearrormodel}
\end{equation}
with $\epsilon\in\mathbb{R}$ a random error variable with $\epsilon \sim F(0, 1)$. $\hat{Z}$ represents the (symmetric) uncertainty\footnote{The argument for the case of asymmetric uncertainty is similar}.
A standard scale-calibration approach considers the variable $\frac{y-\hat{y}}{\hat{z}}\sim F(0,1)$ from which a desired quantile can be estimated. For instance, the quantiles $q_{0.025}$, $q_{0.975}$ can be used to obtain the prediction interval for a sample $\left[\hat{y}+\hat{z}\cdot q_{0.025}, \hat{y}+\hat{z}\cdot q_{0.975}\right]$. 
In this model, the quantile $q$ plays a {\em scaling} role, i.e., the predicted uncertainty bound $\hat{z}$ can be scaled up or down depending on a desired expected miss rate. We adopt the scaling operation as a fundamental step behind the Uncertainty Characteristics Curve defined in the following section. Here, the location-scale (LS) assumption
is a necessary limitation, however, a broad spectrum of distributions occurring in practice are genuinely LS and many others can be transformed to become LS \cite{Rinne2011}, therefore we consider this assumption mild.  

\paragraph{The Curve}
Given the scaling model above, we develop an assessment tool for prediction intervals. In this context, a single scaling value and the corresponding costs of miss rate and bandwidth, characterize a particular Operating Point (OP). A set of OPs can be obtained by varying the scale applied to ${\hat{Z}}^l$ and ${\hat{Z}}^u$ over the entire range relevant to the data $\mathbf{v}$. 
With $k> 0$ denoting the scaling variable, we recast the dataset $\mathbf{v}$ as functions of $k$: 
\begin{equation}
    \mathbf{v}(k) = \left\{v_i(k)\right\}_{1\leq i\leq N}=\nonumber\\
    =\left\{[y_i, \hat{y}_i, \hat{y}_i-k\hat{z}_i^l, \hat{y}_i+k\hat{z}_i^u]^T\right\}_{1\leq i\leq N}.
\end{equation}
where $\hat{z}_i^l=\hat{y}_i - \hat{y}_i^l$ and $\hat{z}_i^u=\hat{y}_i^u - \hat{y}_i$ are the predicted bands for a sample $i$. 
To further simplify the notation, we use a shorthand to write the bandwidth and the miss rate as functions of $k$
\begin{equation}
    \hat{\beta}(k):=\hat{\beta}(\mathbf{v}(k))\enspace\mbox{and}\enspace
    \hat{\rho}(k):=\hat{\rho}(\mathbf{v}(k)).
    \label{Eq:MetricShorthand}
\end{equation} 
E.g., $\hat{\rho}(k)$ gives the average miss rate of the PI after re-scaling all uncertainty bands, 
$\hat{z}_i^{l,u}$ using $k$. It can be 
readily observed that $\hat{\beta}(k)=c\cdot k$ with $c$ a constant depending on the original dataset. 

We now define the Uncertainty Characteristics Curve. 
\begin{definition}
    The Uncertainty Characteristics Curve (UCC) is a set of operating points $\left\{\left(\hat{\beta}(k), \hat{\rho}(k)\right)\right\}_{k\in S}$
    forming a bidimensional graph with the x-axis corresponding to the bandwidth and the y-axis  
    to the miss rate, and with $S$ a set of desirable scales. 
\end{definition}

The UCC graph shows the {\em trade-off} characteristics between the two cost metrics as a function of the calibration $k$.
As with the ROC \cite{Fawcett06}, a UCC can be parametric, however, in most practical cases is considered non-parametric with the cardinality of $S$ determined by the number of observations.

An extension of the UCC to include additional axes metrics is described in the Appendix. 

Given a dataset of size $N$, Algorithm \ref{Alg:UCC} (see Appendix) shows an efficient computation of the UCC with a complexity of $O(N^2)$. 

\begin{figure*}[hbt]
\centering
\begin{minipage}{.4\textwidth}
  \centering
  \includegraphics[height=3.5cm, width=5.5cm]{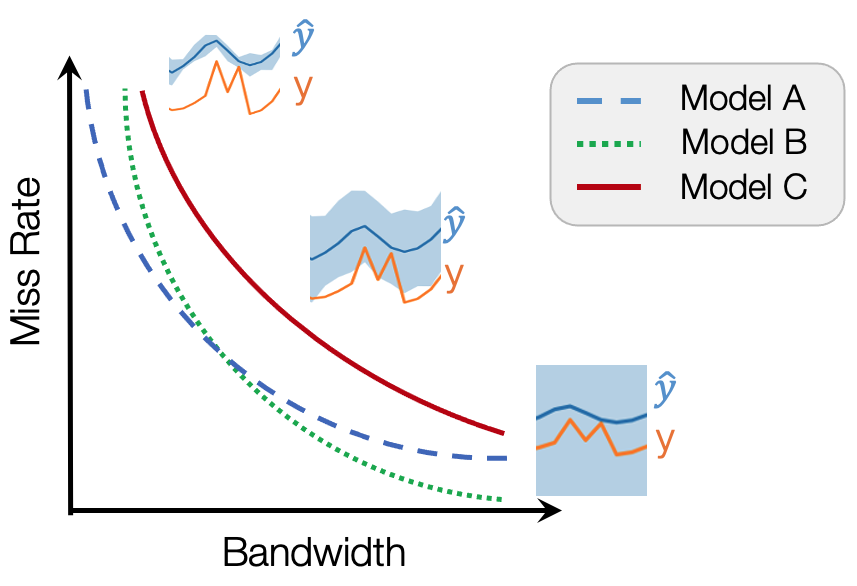}
  \captionof{figure}{An illustrative example of multiple UCCs obtained for different models}
  \label{Fig:UCCExample1}
\end{minipage}\hspace{1.5cm}%
\begin{minipage}{.4\textwidth}
  \centering
  \includegraphics[height=3.4cm, width=5cm]{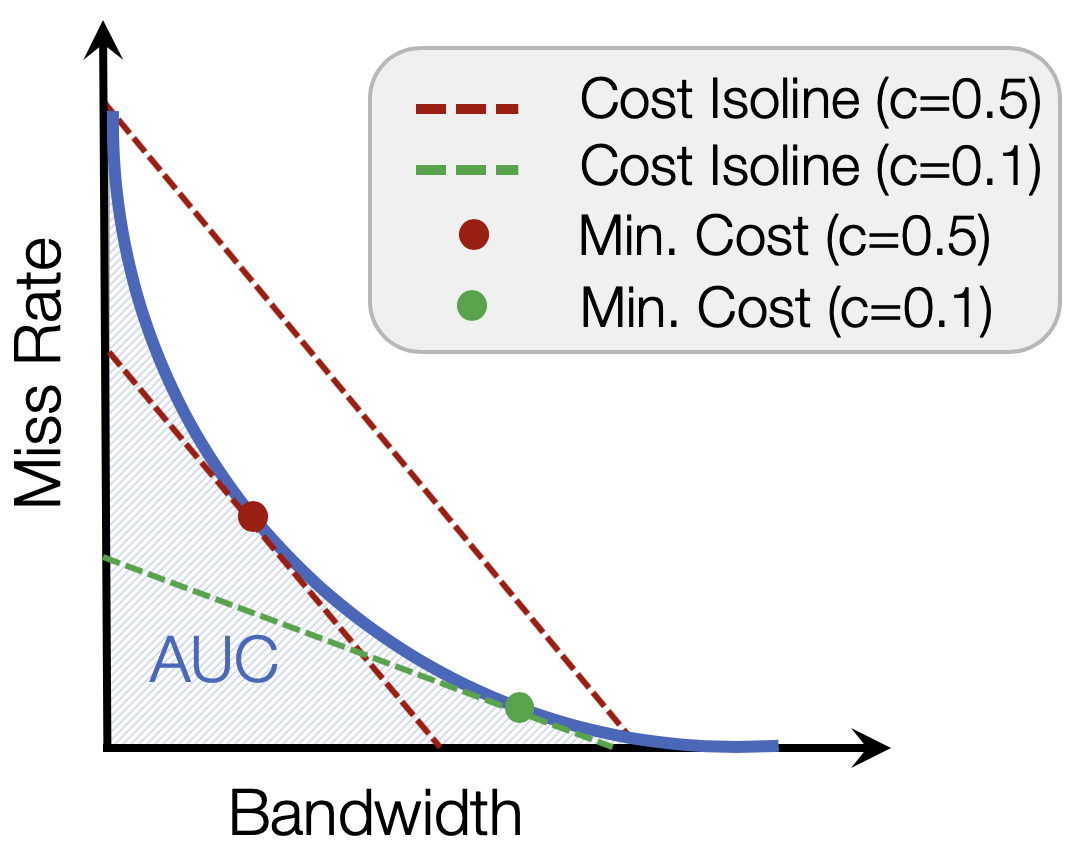}
  \captionof{figure}{Illustration of the Area Under Curve (AUC) and Cost within the UCC graph }
  \label{Fig:UCCExample2}
\end{minipage}
\end{figure*}
An illustrative example of a UCC graph is given in Figure \ref{Fig:UCCExample1} showing three curves corresponding to three different models generating uncertainty bounds $\hat{z}$ around the same target predictions $\hat{y}$. Illustrative icons characterizing low, middle, and high miss rate regimes are also shown. Each UCC curve reflects the operating characteristics of its model by showing a trade-off between the two costs. In this example, the curve for model C dominates A and B and is therefore inferior as it implies higher bandwidth is needed to achieve any given miss rate. In contrast, the model A appears superior to B in a low bandwidth range, while B outperforms A in a low miss rate area. Note that each curve eventually intercepts both axes reaching a zero value. 

Multiple curves on a common graph are comparable as long as the ground truth, $y$, and target 
predictions, $\hat{y}$, are shared across these multiple models - as is the case, for example, 
with a model's PI along with with a constant-band reference. The dependence on $\hat{y}$ is 
due to the fact that the bounds are obtained by adding scaled bands, $kz$, to $\hat{y}$ resulting in differently shaped bounds for different $\hat{y}$ thus inducing different intersection with the observations. We give an illustrative example of this dependence in the Appendix.

\paragraph{Cost Function}
Considering the cost trade-off between the two axes at a particular operating point, it is useful to define a function combining the two in a meaningful way. The simplest example is a linear cost function
\begin{equation}\label{Eq:CostFunction}
    C(k) = c \hat{\beta}(k) + (1-c) \hat{\rho}(k), \enspace c\in[0,1]
\end{equation}
that uses an application-dependent factor, $c$, to focus on a specific area of the operating range (e.g., low miss rate area). On the UCC coordinate system, $C(k)=const$ shows as an isocost line (see Figure \ref{Fig:UCCExample2}) whose slope is proportional to $-c/(1-c)$. A minimum achievable cost, $C(k^*)$ with $k^*=\argmin_k C(k)$, is an intersection of a model's UCC and the minimal isocost as illustrated in the example. In this context, the UCC provides for a visual assessment between the original OP cost and the optimum cost as well as gives the scaling $k^*$ needed to reach that optimum. 

In the Appendix, we point out interesting relationship between the above cost function and the  
well-known measures of Mean Absolute Error and Interval Score.

\paragraph{Area Under the UCC (AUUCC)}

It is desirable to define a summary metric capturing the overall quality of a PI model. Given that the UCC coordinates correspond to costs, a sensible choice is the area under the curve (or AUUCC), as shown in Figure \ref{Fig:UCCExample2}. Models predicting bounds that incur lower cost across the entire 
operating range will produce a lower AUUCC. Thus, in absence of a pre-determined operating point, the AUUCC measure can be a useful OP-agnostic indicator of the predictor's quality. Alternatively, if a certain range of the cost, say, the miss rate values is anticipated, a {\em partial} AUUCC focusing on that range can be determined, similar to the notion of partial ROC AUC \cite{PartialAUC2013}.

Unlike with the ROC AUC analysis, the range of the AUUCC depends on the PI range and is therefore data dependent. This underscores the need for normalization as discussed in Section \ref{Sec:RelativeGain}.

\paragraph{AUUCC Gain}

We now return to the issues discussed in 
Section \ref{Sec:RelativeGain}.  
By using the AUUCC in the relative gain calculation 
the question about the specific OP is integrated out, via the area calculation. 
Let $A_M$ represent the AUUCC of a model and let $A_{Const}$ refer to the constant-band counterpart.
The AUUCC Gain is then defined as:
\begin{equation}
    G_M = \frac{A_{Const}-A_M}{A_{Const}}\cdot 100\%\label{Eq:AUUCCGain}.
\end{equation}
As discussed with examples from Figure \ref{Fig:GainExample}, 
negative gains
are an indication of a model issue (misspecification, over-training, etc.). A positive 
value summarizes the overall OP-agnostic quality. 
The partial-AUUCC gain is calculated similarly via Eq. (\ref{Eq:AUUCCGain}) using partial AUUCC values. 

\paragraph{Interpreting the AUUCC}

A probabilistic interpretation of the area under the ROC is well known \cite{Fawcett06}. 
Bi et al. \cite{Bi_REC_2003} also established a connection between the area under the Regression Error Characteristics (REC) curve and the expected error. 
In a similar vein, we derive a probabilistic interpretation for the AUUCC.
\begin{definition}(Critical Scale).\label{Def:CritScale}
    Given an observation $v_i=[y_i, \hat{y}_i, \hat{y}_i^l, \hat{y}_i^u]^T$, 
    a critical scale is a factor $k_i$ calculated according to 
    \vspace{-.5cm}
    \begin{equation}
        k_i := 
     \begin{cases}
      \frac{z_i}{\hat{z}_i^u}& \text{for}\,\,z_i\geq 0\\
      -\frac{z_i}{\hat{z}_i^l}& \text{otherwise}\\
     \end{cases}\label{Eq:ScaleDef}
    \vspace{-.2cm}
    \end{equation}
    where $z_i=y_i - \hat{y}_i$, $\hat{z}_i^l=\hat{y}-\hat{y}_i^l$, and $\hat{z}_i^u=\hat{y}_i^u-\hat{y}$.
\end{definition}
The critical value $k_i$ scales the active (lower or upper) error band, $\hat{z}_i^{l,u}$, so that it captures the observation $y_i$ with no excess. Note that this notion is also utilized in 
the Algorithm \ref{Alg:UCC}.

\begin{proposition}\label{Prop:missrate}
    Let $B$ denote a bandwidth random variable generated by the following procedure: (i) Randomly select 
    an observation, $v=[y, \hat{y}, \hat{y}^l, \hat{y}^u]^T$ according to $p_V$, (ii) determine its critical scale, $k$, via Eq. (\ref{Eq:ScaleDef}), and (iii) obtain the average bandwidth value via
    Eq. (\ref{Eq:MetricShorthand}) using the exact metric $b={\beta}(k)$. Let $p_B$ denote the probability density of $B$.
    The area under the UCC, calculated from a finite sample $\{v_1,..., v_N\}$, is an estimator of the expected value $\left<B\right>_{p_B}$. 
\end{proposition}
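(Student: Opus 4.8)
The plan is to exploit the special structure the scaling operation imposes on both axes of the UCC, reducing the area to a one-dimensional integral of an empirical survival function. First I would establish the pivotal link between capture and the critical scale. For a fixed observation $v_i$ and scale $k$, the rescaled interval is $[\hat{y}_i - k\hat{z}_i^l,\ \hat{y}_i + k\hat{z}_i^u]$, and a direct comparison with Definition \ref{Def:CritScale} shows that $y_i$ lies inside it precisely when $k \geq k_i$: if $z_i \geq 0$ the binding constraint is the upper bound, $z_i \leq k\hat{z}_i^u \Leftrightarrow k \geq z_i/\hat{z}_i^u = k_i$, and the case $z_i<0$ is symmetric. Consequently the empirical miss rate is the empirical survival function of the critical scales,
\begin{equation}
\hat{\rho}(k) = \frac{1}{N}\sum_{i=1}^N \mathbf{1}[k_i > k].
\end{equation}

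Next I would invoke the linearity of the bandwidth. As noted in the text, $\hat{\beta}(k)=ck$ with $c=\frac{1}{2N}\sum_i(\hat{z}_i^u+\hat{z}_i^l)$, so the UCC is a monotone decreasing curve in the $(\hat{\beta},\hat{\rho})$ plane that reaches $\hat{\rho}=0$ at $k=\max_i k_i$. Its area is $\int_0^\infty \hat{\rho}\,d\hat{\beta}$; substituting $d\hat{\beta}=c\,dk$ gives $c\int_0^\infty \hat{\rho}(k)\,dk$, and applying the layer-cake identity $\int_0^\infty \mathbf{1}[k_i>k]\,dk = k_i$ term by term yields
\begin{equation}
\mathrm{AUUCC} = \frac{c}{N}\sum_{i=1}^N k_i = \frac{1}{N}\sum_{i=1}^N \hat{\beta}(k_i).
\end{equation}
This already exhibits the AUUCC as the sample average of the bandwidth evaluated at each observation's own critical scale, i.e. the empirical counterpart of the three-step generative procedure defining $B$.

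Finally I would connect this average to $\langle B\rangle_{p_B}$. The variable $B=\beta(K)$, with $K$ the critical scale of a draw $v\sim p_V$; since $\beta(k)=\bar{c}\,k$ with $\bar{c}=\frac{1}{2}\E_{p_V}[\hat{Z}^u+\hat{Z}^l]$, we have $\langle B\rangle = \bar{c}\,\E[K]$. The AUUCC equals $c\cdot\frac{1}{N}\sum_i k_i$, in which the $k_i=K(v_i)$ are i.i.d.\ draws of $K$ so that $\frac{1}{N}\sum_i k_i$ estimates $\E[K]$, while $c$ is the sample estimate of $\bar{c}$; both converge to their population values, so the AUUCC is a consistent plug-in estimator of $\bar{c}\,\E[K]=\langle B\rangle_{p_B}$, which is the claim.

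The main obstacle I anticipate lies in this last step's bookkeeping: the AUUCC uses the empirical constant $c$ rather than the population $\bar{c}$ appearing in $B=\beta(K)$, so the finite-sample AUUCC is a plug-in (consistent) estimator rather than an exactly unbiased one, and the statement must be read in that asymptotic sense. A secondary care point is handling ties among the $k_i$ and the measure-zero event $z_i=0$ (where $k_i=0$), so that the survival-function representation and the integral substitution remain valid.
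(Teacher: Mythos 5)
Your proof is correct, and it rests on the same pivotal fact as the paper's — that an observation is captured at scale $k$ precisely when $k\geq k_i$, so the miss rate is the survival function of the critical scale (this is the paper's Lemma \ref{Lemma:1}) — but the argument is organized in the opposite direction. The paper works at the population level first: it applies the tail-expectation formula $\left<B\right> = \int_0^\infty P(B>b)\,db$, uses monotonicity of $\beta$ in $k$ together with the lemma to identify $P(B>b)$ with the population miss rate $P_m(b)$, concludes that $\left<B\right>=\int_0^\infty P_m(b)\,db$ is the area under the idealized UCC, and only then argues that the empirical AUUCC is a rectangular-rule Riemann sum converging to this integral. You instead apply the layer-cake identity to the \emph{empirical} survival function, which yields the exact finite-sample identity $\mathrm{AUUCC}=\frac{c}{N}\sum_i k_i=\frac{1}{N}\sum_i \hat{\beta}(k_i)$, and then conclude by the law of large numbers and a continuous-mapping (Slutsky) step that this product of two sample averages converges to $\bar{c}\,\E[K]=\left<B\right>$. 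Your route buys an exact closed form for the empirical AUUCC — a fact the paper only states informally after the proposition, namely that the sample average of the bandwidths $b_i$ from Algorithm \ref{Alg:UCC} approximates the AUUCC — and it makes the estimator's status precise (a consistent plug-in estimator rather than an unbiased one, since the empirical slope $c$ and the critical scales are computed from the same sample); the paper's route buys a direct probabilistic meaning for the area under the population UCC itself, prior to any sampling argument. The one bookkeeping discrepancy is harmless: the paper's Riemann sum uses right-endpoint heights $\hat{\rho}(k_i')$ and so equals $\frac{c}{N}\sum_{i=1}^{N-1}k_i'$, differing from your step-function area by $c\,k_N'/N$, which vanishes as $N\rightarrow\infty$, so both conventions give the same limit.
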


The Proposition \ref{Prop:missrate} states that the AUUCC estimates the expected bandwidth over the set of data-induced operating points. Consequently, for a given sample of predictions, $\{v_1,..., v_N\}$, the sample average of the corresponding bandwidth values, $\{b_1,..., b_N\}$, determined in Algorithm \ref{Alg:UCC} approximates the AUUCC. This connection is analogous to one pointed out by Bi et al \cite{Bi_REC_2003} for the REC curve. 

The proof of Proposition \ref{Prop:missrate} along with additional results as well as more discussion can be found in the Appendix. 

\paragraph{Significance Testing}
 Standard tools of significance testing are applicable to the UCC in a straight-forward manner. 
 If a pairwise comparison between two models in terms of the AUUCC is desired, the non-parametric 
 paired permutation test \cite{dwass1957_permutationtest} is applicable and was also used in our
 experiments.

\section{Related Work}

Uncertainty quantification in machine learning (ML) is a long-standing field of active research. 
Sources
of uncertainty are generally categorized as epistemic or aleatoric \cite{Kiureghian09, Kendall2017_whatuncertainty}. In classification tasks, uncertainty is expressed as a measure of confidence  
accompanying a result \cite{gal2015theoretically, Guo2017, Lakshminarayanan2017, ovadia2019trustinuncertainty}.
Combined with an optional calibration step, e.g. \cite{Zadrozny2002}, a quality assessment of such estimates relies on summary metrics, such as the Brier score \cite{Brier1951, Bradley2019, Lakshminarayanan2017, Guo2017, Zadrozny01obtainingcalibrated}, Expected Calibration Error \cite{Naeini15, Guo2017}, ROC-like metrics and  Accuracy-vs-Confidence curves \cite{Chen2019_linearprobes, Lakshminarayanan2017}.
Uncertainty in {\em regression} tasks involves estimating PIs 
(e.g., \cite{KoenkerQuantileRegression78, Papoulis1989}) as well as 
in state-of-the-art ML \cite{Nix1994_variancemodel, gal2015theoretically, Kendall2017_whatuncertainty}. 
However, the methodology of comparing their quality is relatively scarce, ranging  
from reliance on calibration and sharpness \cite{KuleshovAccurateUncertainties, Gneiting2007}, coverage  \cite{Oh2020_crowdcounting}, to using summary likelihood measures \cite{Lakshminarayanan2017}.

The Uncertainty Characteristic Curve (UCC) broadens the evaluation aspect drawing an analogy to the well-known ROC \cite{Fawcett06}.
The trade-off between two costs has been studied and applied previously \cite{Gneiting2007, Dunsmore1968, Shen2018_mentionsbandwidth, Tagasovska2019}. However, most reports rely on a specific OP during the assessment stage. 
A connection between the cost function (Eq. (\ref{Eq:CostFunction})) and the {\em Interval Score} \cite{Gneiting2007, Dunsmore1968} exists and is elaborated on in the Appendix.
In the context of regression, Bi et al. \cite{Bi_REC_2003} developed an assessment tool termed Regression Error Characteristic (REC) curve utilizing a constant tolerance band around a regression target. The REC allows for a comprehensive assessment of regressors. The UCC is conceived in a similar vein. Besides the different application and metrics used, 
the UCC fundamentally differs from the REC by not relying on a constant tolerance band but generalizing 
to an arbitrary tolerance band.
Finally, our work should be contrasted to calibration curves (also known as 
reliability diagrams) used for assessment primarily in prognostic aspects of classification tasks \cite{ReliabilityDiagrams2005}, and, more recently, in regression tasks \cite{tran2020methods}.
A calibration curve captures the amount of over- and under-confidence in the PI with respect to observed 
quantiles. While these curves vary the calibration setting 
there are two essential differences: (1) both axes are quantiles (i.e., there is no cost trade-off relationship), 
and (2) the actual quality ("accuracy") of the PI is not captured. Poor
PI can obtain a perfect calibration curve and vice versa. 

\section{Experiments}

In this section 
we present selected case studies that highlight relevant UCC use scenarios, also referring the reader to the
Appendix 
for an expanded report and configuration details. Result-reproducing notebooks
are provided in the Supplementary Material.

\vspace{-.3cm}
\paragraph{UCC Implementation}

The UCC was implemented in Python 3 and is available as a self-standing class providing methods 
to generate results shown in this paper and beyond. The UCC code along with an introductory exercise notebook is also part of the Supplementary Material, and will be made publicly available.




\subsection{Synthetic Data}
\label{Sec:SyntheticData}

We begin with a basic synthetic modeling example\footnote{https://scikit-learn.org/stable/auto\_examples/ensemble/{\allowbreak}plot\_gradient\_boosting\_quantile.html}, in which   
the function $x\sin x$ is mixed with an additive gaussian noise and is sampled randomly to create 5000 training points. Additionally, 1000 testing points are obtained from the non-noisy version via  
equidistant sampling. 
The training data are used to obtain a GBR target predictor using \cite{scikit-learn}.
The  prediction intervals were obtained using the quantile GBR method 
whereby a (1) well-tuned and a (2) under-parameterized ("weak") model were created.   
Besides constant-band prediction intervals, we also add a {\em random} symmetric error band drawn from a uniform distribution $\hat{z}_i\propto U(\frac{1}{3}\sigma_{\hat{y}}, 3\sigma_{\hat{y}})$ with $\sigma_{\hat{y}}$ the standard deviation of $\hat{y}$, representing a "worst-case" PI. 
Finally, representing an {\em ideal} prediction interval, an ``Epsilon-Perfect'' symmetric bound
was constructed as $\hat{z}_i=|\hat{y}_i-y_i|+\epsilon$, with $\epsilon$ being a small amount of additive noise (to visualize a curve rather than a single point). Thus, with proper scaling this bound will capture all of the observation 
at once, with no excess (modulo the $\epsilon$ noise), as it uses the ground truth. 
Figure \ref{Fig:SyntDatasetUCC} shows the resulting UCCs. Hollow circles mark the operating points calibrated 
to minimize the cost, Eq. (\ref{Eq:CostFunction}), with $c=0.1$. 
The ``$\epsilon$-perfect'' curve--the best achievable curve--shows up as a vertical line reaching the 
x-axis at a fixed positive bandwidth (around 0.85). This is a consequence of the bandwidth metric accounting for the minimum width necessary to capture the ground truth.
The remaining models rank as expected: the tuned GBR performs best, followed by the constant bound, the weak GBR, and the random bounds. A table with the full list of summary metrics can be found in the Appendix. 
All AUUCC values shown are significantly different at $p<0.01$, based on the pairwise permutation test \cite{dwass1957_permutationtest}.

\begin{figure*}
\centering
\begin{minipage}{.3\textwidth}
  \centering
  \includegraphics[width=4.7cm, height=3.5cm]{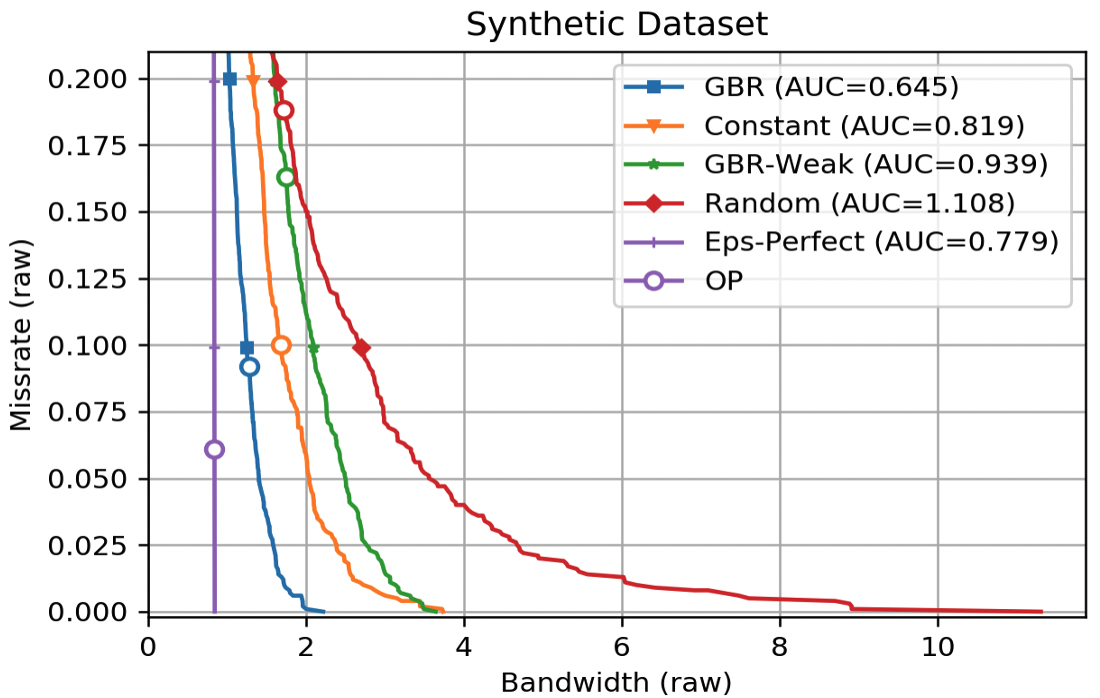}
  \captionof{figure}{UCC plot with curves for the 
      various prediction intervals on the synthetic dataset. OP stands for operating point.}
  \label{Fig:SyntDatasetUCC}
\end{minipage}\hspace{.5cm}%
\begin{minipage}{.27\textwidth}
  \centering
  \includegraphics[width=4.5cm]{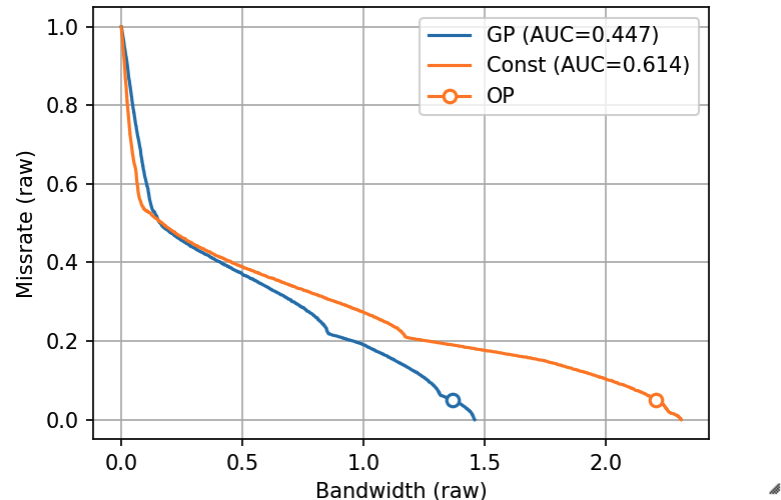}
  \captionof{figure}{UCC corresponding to the Gaussian Process (GP) task shown in Figure \ref{Fig:GainExample}}
  \label{Fig:GainExampleUCC}
\end{minipage}\hspace{.5cm}%
\begin{minipage}{.27\textwidth}
  \centering
  \includegraphics[width=4.5cm]{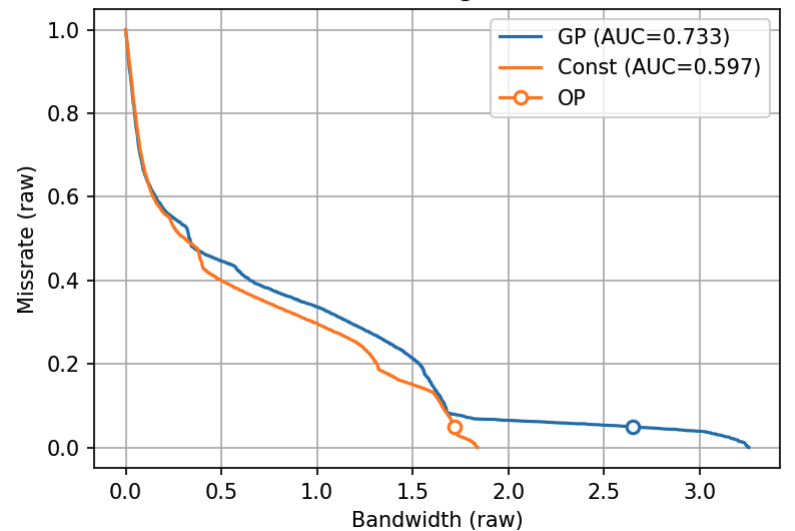}
  \captionof{figure}{UCC corresponding to the GBR QR model shown in Figure \ref{Fig:GainExample}}
  \label{Fig:GainExampleQRUCC}
\end{minipage}
\end{figure*}


\vspace{-.3cm}
\paragraph{Example from Section \ref{Sec:RelativeGain}}
Returning to the example shown in Figure \ref{Fig:GainExample} involving a Gaussian Process (GP) regressor
and a Gradient Boosting regressor (GBR), we now plot and examine 
the corresponding UCCs.
Figure \ref{Fig:GainExampleUCC} shows the GP and its constant-band reference. 
The AUUCC Gain for the GP uncertainty predictions is 57.3\%
indicating a good performance. From the Figure \ref{Fig:GainExampleUCC} we observe 
that the GP fares increasingly well as the miss rate OP decreases from 0.5 to 0.0 obtaining most of the gain. For miss rates between 0.5 and 1.0, the constant-band reference slightly outperforms the GP. 
Note that both curves exhibit a visible break around miss rate of 0.5. This point corresponds to a bandwidth at which both PIs
fully envelope the observations inside the regions with training data availability (small prediction error) while missing most of the gap regions.  

Figure \ref{Fig:GainExampleQRUCC} shows the UCC of the GBR portion of Figure \ref{Fig:GainExample}. 
As argued in Section \ref{Sec:RelativeGain}, the GBR model is inferior to the constant-band reference
which is reflected in the UCC with the AUUCCs of 0.733 and 0.597 for the GBR and the reference, respectively. 
The AUUCC Gain is -31.5\%. 

\paragraph{Introductory Example from Section \ref{Sec:Intro}}

Two BNN sampling methods, namely the Variational Inference (VI)
and the Hamiltonian Monte Carlo (HMC), resulted in an inconclusive comparison in terms of log likelihood (see Figure \ref{Fig:IntroGapExample}). 
The corresponding AUUCC gains are shown in Table \ref{Tab:AUUCCGainsForIntro}. The partial gains
are calculated focusing on a miss rate range $[0, 0.5]$ (i.e., high coverage).
\begin{table}[htb!]
\vspace{-.4cm}
\centering    
    \caption{AUUCC Gains for the VI and the HMC methods shown in Figure \ref{Fig:IntroGapExample} }
    \label{Tab:AUUCCGainsForIntro}
    \begin{tabular}{lcc}
        \toprule
        {Method} &  { \% $G_{AUUCC}$} &  { \% $G_{Partial}$} \\        
        \midrule
        VI &     -4.3 &    -45.3 \\
        HMC &     6.1 &    72.7 \\
        \bottomrule
    \end{tabular}    
\end{table}
\vspace{-.3cm}
In spite of the VI likelihood being slightly higher than that of the HMC, the quality of the HMC bounds is nevertheless higher obtaining positive gains, which aligns with the general consensus that HMC estimates are superior \cite{neal2011mcmc, Foong2020}.

The same example offers a view of another interesting phenomenon, namely a UCC cross-over, shown in Figure \ref{Fig:ZoomGap} which zooms into the gap interval, $[-1.5, 1.5]$.
\begin{figure} 
      \centering
      \includegraphics[width=14cm, height=4.5cm]{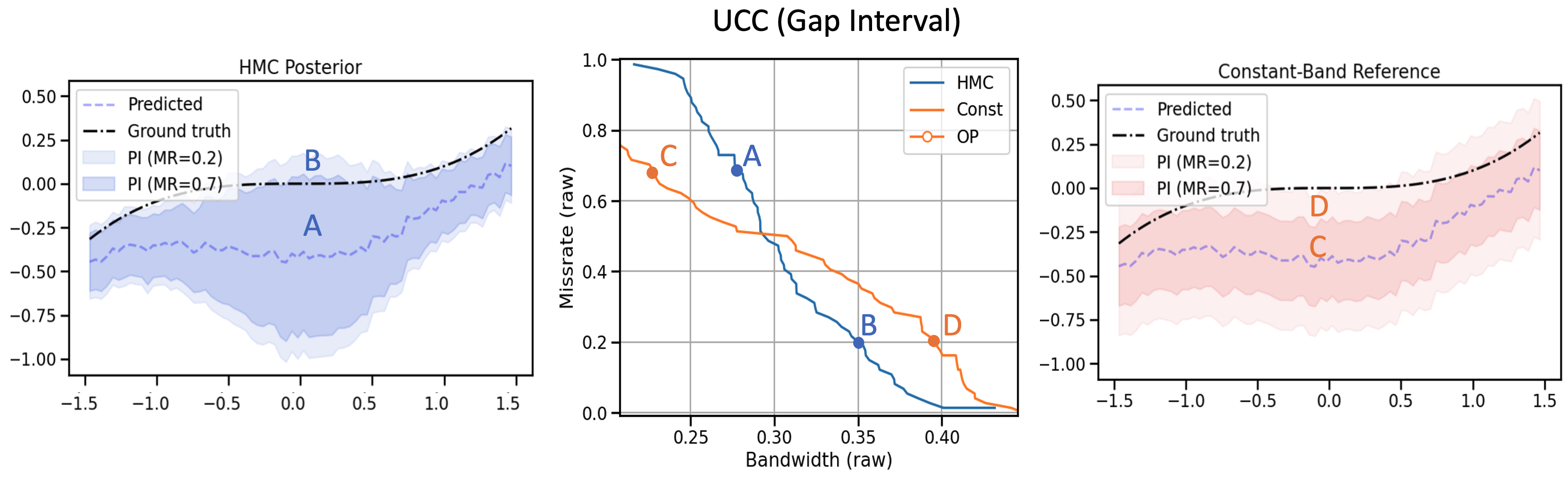}
      \caption{Zoomed section of the gap interval from Figure \ref{Fig:IntroGapExample} with UCC cross-over. MR stands for Miss Rate.}
      \label{Fig:ZoomGap}
\end{figure}
The chart highlights two distinct OPs for each curve. OPs A and C lie in a high miss rate 
range. In this regime, as shown in the data plot on the left, both PI miss 70\% of the observations, however, the 
HMC's bandwidth is higher (its center-widened PI are closer to but have not reached the observations in the center). 
Therefore the HMC UCC lies above the reference UCC. The situation is reversed for OPs B and D which operate 
at miss rates of 0.2. At this miss rate, the HMC's central-gap widening becomes beneficial and captures the observations
without excessive bandwidth, unlike the constant band, as shown on the left-hand side of 
the figure\footnote{Note that the gains reported in Table \ref{Tab:AUUCCGainsForIntro} were calculated on the full 
data sample, not on the zoomed-in excerpt used in this discussion.}.

The above cross-over phenomenon underscores the power of insight in the UCC:
Evaluating the above PIs at any fixed operating point would tell an incomplete story, the conclusion of which depends on the operating point chosen.
%

\begin{figure*} [h]
       \centering
       \includegraphics[width=14cm, height=4.cm]{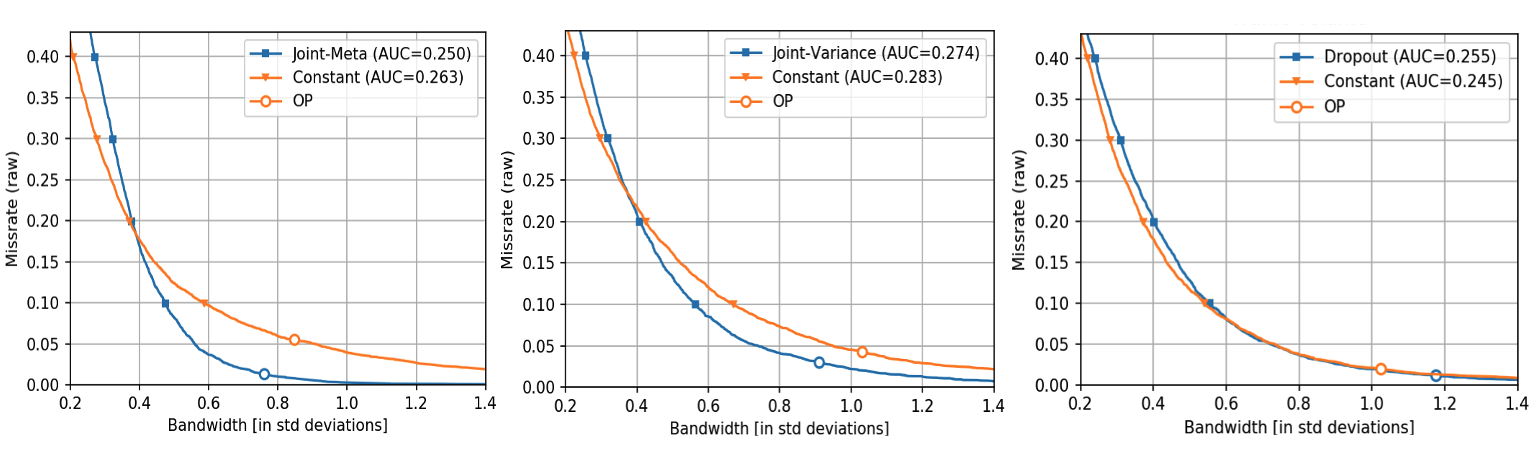}
       \caption{UCCs obtained on the Traffic dataset using an LSTM-based target predictor and three different techniques}
       \label{Fig:TrafficUCC}
\end{figure*}
\vspace{-.3cm}
\subsection{Real-World Datasets}
\label{Sec:RealWorldDatasets}
\vspace{-.3cm}
We chose three real-world datasets, namely 
(1) the Boston Housing Dataset\footnote{https://github.com/scikit-learn/scikit-learn/blob/master/sklearn/datasets/data/boston\_house\_prices.csv}, 
(2) the Wine Quality Dataset \cite{WineQualityDatasetPaper}\footnote{https://archive.ics.uci.edu/ml/machine-learning-databases/wine-quality},
and (3) Metro Interstate Traffic Volume\footnote{https://archive.ics.uci.edu/ml/machine-learning-databases/00492/}.

For brevity we only present results of the Traffic dataset in the main paper referring the reader to 
the Appendix for a comprehensive set of UCC results. 

For the sequential Traffic Volume task we employed an LSTM-based sequence-to-sequence architecture developed in \cite{ICMLAnon}. In this study, several approaches of generating prediction intervals were taken: (1) ``Joint Meta'' (JMS) model with an internal component learning to generate symmetric prediction intervals. The JMS component has a recursive structure trained jointly with the target  predictor; (2) ``Joint-Variance'' model (JMV) - a similar sequence-to-sequence model with additional output nodes implicitly modeling the heteroskedastic variance 
via a gaussian likelihood loss, similar to \cite{Lakshminarayanan2017, Kendall2017_whatuncertainty, Oh2020_crowdcounting}; (3) Variational Dropout (DOMS) - a model with an LSTM structure trained according to \cite{gal2015theoretically} to generate multiple output sequences. While the mean of the multiple outputs serves as the target predictor, their variance determines the (symmetric) prediction intervals. Note that all three cases represent time-varying, symmetric prediction intervals.
Figure \ref{Fig:TrafficUCC} shows the three UCCs. 
It can be observed that while the JMS and JMV models outperform their constant baselines in the low miss rate range (< 0.2), the DOMS case performs comparably to the constant for most of the range. 
The overall AUUCC gains are as follows: 4.9\% (JMS), 3.2\% (JMV), and -4.0\% (DOMS). 

"Cross-overs" can be seen in all cases, underscoring the need for a systematic visualization the UCC offers.  


It may be argued that certain UCC ranges may not be of practical interest, for example, miss rates of 50\% or more may be considered too high. We believe that, as an OP-agnostic tool, the UCC should include the full range in absence of  a-priori knowledge. It is conceivable that certain applications may be tuned to ``reject'' (miss) 50\% or more observations if their inclusion is associated with a high cost (e.g. accepting a potential anomaly as normal). The argument may only be fully resolved by having a concrete set of applications at hand.  Then the UCC analysis can be adapted by focusing on partial AUUCC as discussed above. 
\vspace{-.2cm}
\section {Conclusions}
\vspace{-.2cm}
In this work we introduced the Uncertainty Characteristics Curve (UCC) in conjunction with 
a gain metric relative to constant-band references, and demonstrated its power in diagnostics of prediction intervals. The UCC is formed by varying a scaling-based calibration applied to the prediction intervals,
thus characterizing their quality in an operating point agnostic manner. 
We analyzed the area under the curve and tied its
value to certain probabilistic expectations of the metrics involved. 
In several experimental case studies, the UCC 
was shown to provide important insights in terms of both the AUUCC gain metrics and the operating characteristics along the calibration range. 
With the release of the corresponding code, we believe the UCC will become a valuable new addition in the diagnostic toolbox for ML researchers and practitioners alike.  

\section{Acknowledgements}
We thank Karthikeyan Shanmugam of IBM Research and David Klus\'a\v{c}ek of the Faculty of Mathematics and Physics of the Charles University in Prague for helpful discussions regarding the proofs in this manuscript.

\bibliography{main}
\bibliographystyle{abbrvnat}

\clearpage

\paragraph{Appendix to Paper: Uncertainty Characteristics Curves: A Systematic Assessment of Prediction Intervals}

\section{Algorithm to calculate the UCC}
\begin{algorithm}[H]
  \caption{Algorithm to calculate the UCC}
  \label{Alg:UCC}
\begin{algorithmic}
    \State {\bfseries Input:}
  Ground truth, predictions, uncertainty estimates $\{y_i, \hat{y}_i, \hat{y}_i^l, \hat{y}_i^u\}_{1\leq i\leq N}$ 
    \State {\bfseries Output:} Set of UCC points $\{x_i, y_i\}_{1\leq i\leq N}$
    \For{$i\leftarrow 1$ {\bfseries to} $N$}
    \State $z_i\leftarrow y_i - \hat{y}_i$ \Comment{Observed error}
    \State 
    $k_i \leftarrow   
     \begin{cases}
      \frac{z_i}{\hat{y}_i^u-\hat{y}}& \text{for}\,\,z_i\geq 0\\
      -\frac{z_i}{\hat{y}-\hat{y}_i^l}& \text{otherwise}\\
     \end{cases}
    $ \Comment{Critical scale} 
    \State $x_i \leftarrow \hat{\beta}(k_i)$  \Comment{$\hat{\beta}, \hat{\rho}$ defined in Eq. (\ref{Eq:MetricShorthand})}
    \State $y_i \leftarrow \hat{\rho}(k_i)$
    \EndFor
\end{algorithmic}
\end{algorithm}

\section{Confounding predictions and uncertainty: An example}
Suppose there is a model predicting the regression target, $\hat{y}$, as well as a gaussian uncertainty with mean $\hat{y}$ and variance $\hat{\sigma}^2$. The loss with respect to model parameters $\theta$ is a function of both the predictions, $\hat{y}_i$ and the uncertainties, $\hat{\sigma}_i$:
$-log P(\theta) = \frac{1}{2} \sum_i^N \frac{(y_i-\hat{y}_i)^2}{\hat{\sigma}_i^2} + \log \hat{\sigma}_i^2 + C$.
We refer to the fact that the loss combines these two predictive aspects (target and uncertainty) as "comfounding" in the main paper.   

\section{Additional Metrics}\label{App:Sec:AddMetrics}
In addition to bandwidth and miss rate, defined in Section \ref{Sec:Metrics}, 
we define two additional, related metrics as follows 
    \begin{flalign} 
        \mbox{Excess: }\nonumber\\
        \qquad\xi(V) &= \E\left[\mathbf{1}_{Y\in[\hat{Y}^l, \hat{Y}^u]}\cdot
            \min\left\{Y-\hat{Y}^l, \hat{Y}^u-Y\right\}\right]&\nonumber\\
        \qquad \hat{\xi}(\mathbf{v}) &=
        \frac{1}{N}\sum_{i:y_{i}\in[\hat{y}_{i}^l, \hat{y}_{i}^u]}
        \min\left\{y_{i}-\hat{y}_{i}^l, \hat{y}_{i}^u - y_{i}\right\}\label{Eq:Excess}&
    \end{flalign}
    \begin{flalign} 
        \mbox{Deficit: }\nonumber\\
        \qquad\delta(V)&=\E_{p_V}\left[\mathbf{1}_{Y\notin[\hat{Y}^l, \hat{Y}^u]}\cdot
            \min\left\{\left|Y-\hat{Y}^l\right|, \left|Y-\hat{Y}^u\right|\right\}\right]& \nonumber\\
        \qquad \hat{\delta}(\mathbf{v}) &=
        \frac{1}{N}\sum_{i:y_{i}\notin[\hat{y}_{i}^l, \hat{y}_{i}^u]}
        \min\left\{|y_{i}-\hat{y}_{i}^l|, |y_{i}  -\hat{y}_{i}^u|\right\}\label{Eq:Deficit}&
    \end{flalign}
Figure \ref{Fig:Metrics} illustrates all four metrics. The relative proportion of observations lying outside the bounds (i.e., the miss rate) ignores the {\em extent} of the bounds' shortfall. The proposed Deficit, Eq. (\ref{Eq:Deficit}), captures this aspect. The type 2 cost is captured by the Bandwidth, Eq. (\ref{Eq:Bandwidth}). However, its range is indirectly compounded by the underlying variation in ${\hat{Y}}$ and ${Y}$. Therefore we propose the Excess measure, Eq. (\ref{Eq:Excess}), which also reflects the Type 2 cost, but just the portion above the minimum bandwidth necessary to include the observation.
\begin{figure}[htbp!]
       \centering
       \includegraphics[height=2.5cm]{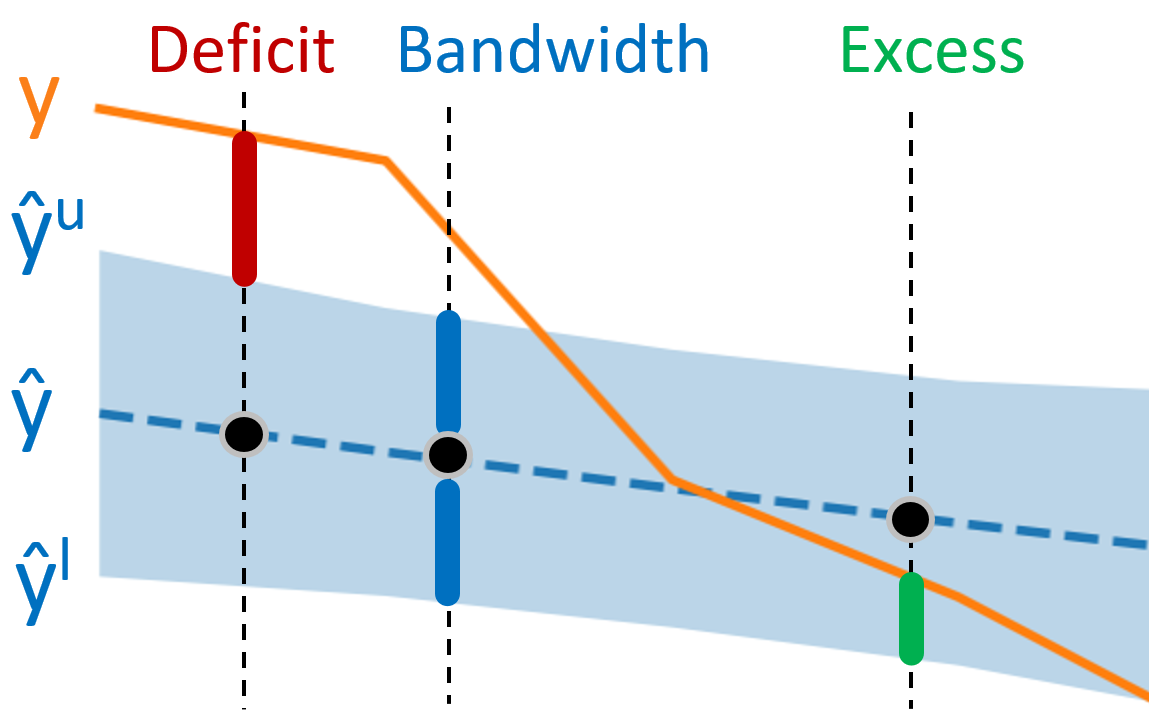}
       \caption{Bandwidth, excess, and deficit costs.}
       \label{Fig:Metrics}
\end{figure}
We will be using Excess and Deficit in reporting additional results in this document and also 
present an additional theoretical result for a UCC on Excess-Deficit coordinates. 

\section{Illustration of the effect of $\hat{y}$ on the UCC}

In Section \ref{Sec:UCC} we mention that multiple curves can be plotted in a common UCC chart
only if they are obtained using same observations, $y$, and target predictions, $\hat{y}$. 
A simple example of two PI is shown in Figure \ref{Fig:yhatmatters}. Both 
PI are constant bands, i.e., they should be characterized by an identical UCC. However, 
because they relate each to different target predictions, $\hat{y_1}$ and $\hat{y_2}$, their 
behavior with respect to the observation, $y$, is completely different. While the PI of $\hat{y_1}$ captures the observation fully at a certain critical scale leading to a "perfect" curve (UCC1), the PI of $\hat{y_2}$ incurs a positive
miss rate even at the same bandwidth. As the bandwidth varies the characteristics of the curved PI 
intersecting the observation will be non-trivial leading to a "rounder" curve (UCC2) as illustrated. 
\begin{figure}[htbp!]
       \centering
       \includegraphics[height=4.5cm]{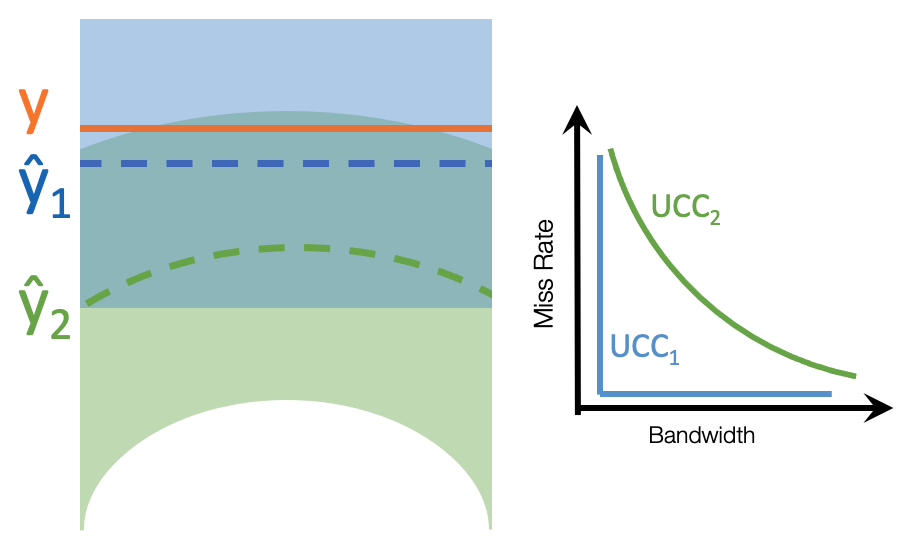}
       \caption{Illustrative example of how identical PI--constant bands in both cases--lead to different UCCs if they relate to different target predictions, $\hat{y}$}
       \label{Fig:yhatmatters}
\end{figure}
This also highlights the need for normalization which is achieved by calculating the AUUCC gain 
(see Eq. (\ref{Eq:AUUCCGain})) which, in the same example, would result in a gain of 0\% for both cases thus making the PI equivalent in terms of their quality.

\section{Proofs And Additional Results}
\label{App:Proofs}

To prove Proposition \ref{Prop:missrate} 
we use the Definition \ref{Def:CritScale} and the Lemma \ref{Lemma:1} below:

\begin{lemma}
    \label{Lemma:1}
    Choose any $v_i\in\mathbf{v}$, with $\mathbf{v}$ a sample set as defined in Section \ref{Sec:Metrics}.
    Let $k_i$ be the critical scale for $v_i$ and $K$ the scale random variable. The following holds
    \begin{equation}
        P(Y\notin[\hat{Y}-k_i\hat{Z}^l, \hat{Y}+k_i\hat{Z}^u])\equiv 1-P(K\leq k_i)\nonumber.
    \end{equation}
\end{lemma}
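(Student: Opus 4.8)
The plan is to prove the equality by establishing an exact identity between two events defined on the random vector $V$: the ``miss'' event $\{Y \notin [\hat{Y} - k_i \hat{Z}^l, \hat{Y} + k_i \hat{Z}^u]\}$ and the event $\{K > k_i\}$, where $K = k(V)$ is the critical-scale random variable obtained by applying Eq. (\ref{Eq:ScaleDef}) to $V$. Once this set identity holds, taking probabilities under $p_V$ and using the complement rule immediately gives $P(\mathrm{miss}) = P(K > k_i) = 1 - P(K \leq k_i)$, the claimed equation. The argument is thus a deterministic, pointwise comparison of inequalities and needs no probabilistic machinery beyond complementation.

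First I would write the miss event as the disjoint union of an ``upper'' miss $\{Y > \hat{Y} + k_i \hat{Z}^u\}$ and a ``lower'' miss $\{Y < \hat{Y} - k_i \hat{Z}^l\}$, and analyze it along the two branches of Eq. (\ref{Eq:ScaleDef}) according to the sign of $z := Y - \hat{Y}$. On the branch $z \geq 0$ (where $K = z/\hat{Z}^u$), a lower miss is impossible since $k_i \hat{Z}^l > 0$, and an upper miss is equivalent to $z/\hat{Z}^u > k_i$, i.e. to $K > k_i$. Symmetrically, on the branch $z < 0$ (where $K = -z/\hat{Z}^l$), an upper miss is impossible and a lower miss is equivalent to $-z/\hat{Z}^l > k_i$, i.e. again to $K > k_i$. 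Unioning the two branches, which together exhaust the sample space, collapses the miss event exactly onto $\{K > k_i\}$.

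The single point that needs care, and which I regard as the only genuine obstacle, is making the boundary bookkeeping exact rather than merely almost-sure, so that the complement appears as $1 - P(K \leq k_i)$ and not $1 - P(K < k_i)$. I would check that the critical scale is defined precisely so that at $K = k_i$ the observation sits on the closed boundary of the scaled interval and is therefore counted as captured; hence the relevant miss condition is the strict inequality $K > k_i$. I would also dispatch the degenerate subcase $z = 0$, where $K = 0$ and $Y = \hat{Y}$ lies inside every positively scaled interval, so it contributes to neither side and is consistent with the identity. With these edge cases settled the set identity is exact, and the lemma follows by taking probabilities.
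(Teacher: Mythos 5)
Your proof is correct, but it takes a genuinely different route from the paper's. You establish the deterministic, pointwise event identity $\{Y \notin [\hat{Y}-k_i\hat{Z}^l,\, \hat{Y}+k_i\hat{Z}^u]\} = \{K > k_i\}$ by branching on the sign of $z = Y - \hat{Y}$, and then take probabilities; the lemma follows exactly, with no limiting or estimation argument. The paper instead argues at the level of finite-sample estimators: it sorts the critical scales $k_1' \leq \cdots \leq k_N'$, observes that at scale $k_i'$ exactly $i$ samples are captured and $N-i$ are missed, identifies $\frac{N-i}{N}$ as the empirical miss rate (an estimator of the miss probability) and $\frac{i}{N}$ as the empirical CDF (an estimator of $P(K \leq k_i)$), and concludes the equivalence from the complementarity of these counts. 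Your version is arguably the more rigorous reading of the statement: sharing complementary estimators does not by itself force equality of the underlying population quantities, whereas your event identity does, and your boundary bookkeeping (the observation at $K = k_i$ lies on the closed interval, so a miss requires the strict inequality $K > k_i$) is the exact population-level counterpart of the paper's parenthetical tie-handling rule. What the paper's counting argument buys in exchange is the finite-sample correspondence itself --- that the sorted critical scales sweep out the empirical UCC with miss rate dropping by $1/N$ per step --- which is precisely what Algorithm 1 computes and what the Riemann-sum argument in the proof of Proposition 1 reuses; your argument would need that counting observation restated separately at that point.
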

\begin{proof} Let $\{k_1, ..., k_N\}$ be the set of critical scales corresponding to $\{v_1,...,v_N\}$ and 
     let $k_1'\leq...\leq k_N'$ denote a sorted sequence of such scales. 
    By definition of the critical scale, for any $k_i'$ in the sequence there are exactly $i$ samples falling within,
    and $N-i$ falling outside their bounds scaled by $k_i$, i.e., 
    \begin{eqnarray}
        y_j\in[\hat{y}_j-k_i\hat{z}_j^l, \hat{y}_j+k_i\hat{z}_j^u] \qquad\forall j:k_j'\leq k_i'\nonumber\\
        y_j\notin[\hat{y}_j-k_i\hat{z}_j^l, \hat{y}_j+k_i\hat{z}_j^u] \qquad\forall j:k_j'> k_i'\nonumber
    \end{eqnarray}
    (Note that in the case of ties only the element with highest index $i$ among the tie set is considered.)
    Thus, the fraction $\frac{N-i}{N}$ corresponds to the empirical miss rate as a function of $k$ 
    (see Eq. (\ref{Eq:MetricShorthand})), which is an estimator of the miss rate probability $P(Y\notin[\hat{Y}-k_i\hat{Z}^l, \hat{Y}+k_i\hat{Z}^u])$. 
    On the other hand,
    considering $K$ a critical scale of a randomly drawn sample, $V$, the fraction $\frac{i}{N}$ is an estimator 
    for the cumulative distribution function $P_K(k_i):=P(K\leq k_i)$. Hence
    \begin{equation}
        1-P_K(k_i)\equiv P(Y\notin[\hat{Y}-k_i\hat{Z}^l, \hat{Y}+k_i\hat{Z}^u]).\label{Eq:KToMissrate}\nonumber
    \end{equation}
\end{proof}

\subsection{Proof of Proposition \ref{Prop:missrate}}
\begin{remark}
    The bandwidth $\hat{\beta}$ and excess $\hat{\xi}$ (Eq. (\ref{Eq:MetricShorthand}))
    are monotonically increasing functions of the scale $k$.
\end{remark}

\begin{proof}
    Using the fact that $B\geq 0$, its expected value can be written as follows:
    \begin{equation}
        \left<B\right>_p = \int_{0}^{\infty} b p_B(b) db = \int_{0}^{\infty}[1-P_B(b)]db
        \label{Eq:BandwidthExpectedValue}
    \end{equation}
    where 
    $P_B$ 
    denotes the cumulative distribution function of $B$. 
    The second equality uses the tail expectation formula \cite{CasellaBook}.
    
    Since $1-P_B(b) = P(B>b)$ and $\beta$ is a monotonic function of $k$
    it holds that
    \begin{equation}
    P(K>k)\equiv P(B>b).
    \end{equation}
    From the above and the Lemma \ref{Lemma:1}, it follows that $P(B>b)$ corresponds to the miss rate 
    associated with the bandwidth $b=\beta(k)$:
    \begin{equation}
        P_m(b):=P(Y\notin[\hat{Y}-k\hat{Z}^l, \hat{Y}+k\hat{Z}^u])\equiv 1-P_B(b)
    \end{equation}
     Hence, Eq. (\ref{Eq:BandwidthExpectedValue}) becomes
    \begin{equation}
        \left<B\right>_{p_B} = \int_0^\infty P_m(b)db.\label{Eq:EBintegral2}
    \end{equation}
    Given $N$ samples, $\mathbf{v}=\{v_1, ..., v_N\}$ from $p_V$, we calculate the set of critical values, $\{k_1,..., k_N\}$.
    The sorted sequence, $k_1'\leq k_2'\leq...\leq k_N'$ gives rise to a sequence of bandwidths $b_1\leq...\leq b_N$. 
    The Riemann sum corresponding to the integral (\ref{Eq:EBintegral2}) is as follows
    \begin{equation}
        S(N) = \sum_{i=1}^N P_m(b_i') \Delta b_i\label{Eq:RiemannSum}
    \end{equation}
    with a partitioning determined by the sorted observations, $b_1\leq...\leq b_N$, $\Delta b_i = b_i-b_{i-1}$, $b_0=0$, and $b_i'\in[b_{i-1}, b_i]$. Choosing $b_i'=b_i$ we rewrite the sum (\ref{Eq:RiemannSum}) as
    \begin{equation}
        S(N) = \sum_{i=1}^N \hat{\rho}(k_i') [b(k_i')-b(k_{i-1}')]\label{Eq:RiemannSum2}
    \end{equation}
    with $\hat{\rho}$ being the empirical miss rate, as per Eq. (\ref{Eq:MetricShorthand}).
    
    Eq. (\ref{Eq:RiemannSum2}) corresponds to evaluating the area under the UCC using the rectangular rule. 
    The sum will approach the expected bandwidth value in Eq. (\ref{Eq:BandwidthExpectedValue}) as $\lim_{N\rightarrow\infty} S(N)$. Thus, the empirical AUUCC is an estimator for the expected bandwidth when using bandwidth-miss rate coordinates. 
    
\end{proof}
According to Proposition \ref{Prop:missrate}, given a dataset, and given the miss rate being one of the coordinates, 
the AUUCC amounts to the other metric's average over the entire operating range. A smaller AUUCC
relates to smaller average bandwidth (or excess) measurements as the calibration scale $k$ varies, as expected from prediction intervals of higher quality. 

\begin{corollary}
\label{Corollary1}
    The area under the UCC with excess-miss rate coordinates 
    is an estimator of the expected value $\left<X\right>_{p_X}$ with
    $X$ the excess random variable and $p_X$ its density function. 
\end{corollary}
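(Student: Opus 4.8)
The plan is to mirror the proof of Proposition~\ref{Prop:missrate} almost verbatim, substituting the excess random variable $X$ for the bandwidth random variable $B$. Concretely, I would define $X$ by the same three-step generative procedure used for $B$ in Proposition~\ref{Prop:missrate}: draw $v\sim p_V$, compute its critical scale $k$ via Eq.~(\ref{Eq:ScaleDef}), and return $x=\xi(k)$, the exact excess evaluated at that scale. Two structural properties of $\xi$ are what make the argument go through, and both are already available: first, $\xi\geq 0$ by its definition in Eq.~(\ref{Eq:Excess}) (it is an indicator times a minimum of two nonnegative gaps), and second, the Remark preceding the proof of Proposition~\ref{Prop:missrate} asserts that $\hat\xi$ is a monotonically increasing function of $k$. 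These are exactly the two hypotheses on $B$ that the earlier proof relied on.

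With those in hand, I would reproduce the chain of identities. Because $X\geq 0$, the tail-expectation formula gives $\left<X\right>_{p_X}=\int_0^\infty[1-P_X(x)]dx$. Because $\xi$ is monotone in $k$, the events $\{K>k\}$ and $\{X>x\}$ coincide for $x=\xi(k)$, so $P(K>k)\equiv P(X>x)$; combining this with Lemma~\ref{Lemma:1} identifies $1-P_X(x)$ with the miss rate $P_m$ at the corresponding operating point. Substituting yields $\left<X\right>_{p_X}=\int_0^\infty P_m(x)dx$, which is precisely the area under the UCC when the abscissa is excess rather than bandwidth. The final step is the discretization: sorting the critical scales $k_1'\leq\cdots\leq k_N'$ induces a sorted sequence of excess values $x_1\leq\cdots\leq x_N$, and the rectangular-rule Riemann sum $\sum_i\hat\rho(k_i')[x(k_i')-x(k_{i-1}')]$ computes the empirical AUUCC and converges to the integral as $N\to\infty$, establishing that the empirical area estimates $\left<X\right>_{p_X}$.

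The one place that genuinely differs from the bandwidth case---and hence where I would spend care---is that $\beta(k)=c\,k$ is linear, giving a clean strictly increasing bijection between $k$ and the abscissa, whereas $\xi(k)$ is only piecewise linear: it stays flat at $0$ until the smallest critical scale, then increases as captured observations accrue contributions while newly captured points enter continuously with zero excess. I would therefore verify that monotonicity (without strict monotonicity on the initial flat stretch) still licenses the event identity $P(K>k)\equiv P(X>x)$---the flat region maps to $x=0$ with miss rate one and contributes nothing to the integral---and that the Riemann-sum step remains valid over this data-dependent, non-uniform partition. Since Proposition~\ref{Prop:missrate} already employs a data-induced partition, this is the main (and only mild) obstacle; everything else transfers unchanged.
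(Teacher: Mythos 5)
Your proposal is correct and is exactly the paper's approach: the paper's entire proof of Corollary~\ref{Corollary1} is the one-line observation that the argument for Proposition~\ref{Prop:missrate} goes through verbatim with the bandwidth variable $B$ replaced by the excess variable $X$, which is precisely the substitution you carry out. Your additional care about the non-strict monotonicity of $\xi(k)$ on its initial flat stretch (where excess is zero and the contribution to the integral vanishes) is a subtlety the paper glosses over, and handling it only strengthens the argument.
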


The proof of Corollary \ref{Corollary1} follows trivially from the proof of Proposition \ref{Prop:missrate} by replacing
the bandwidth variable, $B$, with excess, $X$.

\subsection{AUUCC on the Excess-Deficit Coordinates\ref{Prop:ExDef}}
\begin{proposition}\label{Prop:ExDef}
    Let $X$ and $D$ be the excess and deficit random variables generated by randomly selecting 
    a sample, $v$, determining its critical scale, $k$, and obtaining their values via Eq. (\ref{Eq:MetricShorthand}).
    Let the UCC be defined on the excess-deficit coordinates, $(\hat{\xi}, \hat{\delta})$, and the metrics 
    $\rho, \xi, \delta$ be differentiable and invertible functions. 
    The area under the UCC is an estimator of a quantity proportional to the expected value $\left<D\right>_q$ 
    with respect to a density given by $q(d)=\frac{p_D(d)}{p_X(\delta^{-1}(d))}/Q$, where $p_D$, $p_X$ denote deficit and excess
    densities and $Q$ is a normalizing constant, $Q=\int_0^\infty \frac{p_D(d)}{p_X(\delta^{-1}(d))}dd$. 
\end{proposition}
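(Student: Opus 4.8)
The plan is to follow the skeleton of the proof of Proposition \ref{Prop:missrate}, but to replace its tail-expectation step (which worked only because the miss-rate axis there coincides with the survival function $P(K>k)$) with a genuine change-of-variables computation, since on the excess--deficit plane neither axis is a cumulative distribution function. First I would write the area under the UCC as the integral of deficit against excess. Parametrizing both coordinates by the scale $k$, and using the Remark (excess $\xi$ is monotone increasing in $k$) together with the fact that deficit $\delta$ is monotone decreasing in $k$, I would express the area as $A=\int_0^{\infty}\delta(k)\,\xi'(k)\,dk$, where $\xi'(k)=d\xi/dk$. Exactly as in Proposition \ref{Prop:missrate}, this continuous area is the $N\to\infty$ limit of the rectangular-rule Riemann sum $\sum_i \hat{\delta}(k_i')\,[\hat{\xi}(k_i')-\hat{\xi}(k_{i-1}')]$ over the sorted critical scales, so the empirical AUUCC on these coordinates estimates $A$.

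The core of the argument relates $A$ to a deficit expectation through the induced densities. Since $X=\xi(K)$ and $D=\delta(K)$ are both strictly monotone (hence invertible and differentiable, by the stated assumptions) transformations of the critical-scale variable $K$, the standard change-of-variables formula gives $p_X(\xi(k))=p_K(k)/\xi'(k)$ and $p_D(\delta(k))=p_K(k)/|\delta'(k)|$. Substituting the first identity into the area integral eliminates $\xi'$ in favour of $p_K$ and $p_X$, giving $A=\int_0^\infty \delta(k)\,\frac{p_K(k)}{p_X(\xi(k))}\,dk$. I would then change the integration variable from $k$ to $d=\delta(k)$: because $\delta$ is decreasing, flipping the limits absorbs the minus sign, and the Jacobian $dk=dd/|\delta'|$ together with the second density identity collapses $p_K/|\delta'|$ into precisely $p_D(d)$. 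This yields $A=\int_0^\infty d\,\frac{p_D(d)}{p_X(\xi(\delta^{-1}(d)))}\,dd$.

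Recognizing the integrand as $d$ times the unnormalized density $p_D(d)/p_X(\delta^{-1}(d))$ (where, to match the statement, $\delta^{-1}(d)$ must be read as returning the \emph{excess} value corresponding to deficit $d$, i.e.\ $\xi\circ\delta^{-1}$ evaluated at $d$), I would multiply and divide by the normalizer $Q=\int_0^\infty p_D(d)/p_X(\delta^{-1}(d))\,dd$ to conclude $A=Q\,\langle D\rangle_q$ with $q(d)=\frac{p_D(d)}{p_X(\delta^{-1}(d))}/Q$. Hence the AUUCC is an estimator of a quantity proportional to $\langle D\rangle_q$, as claimed, and Corollary \ref{Corollary1} is recovered as the degenerate case where the x-axis density is replaced by Lebesgue weight.

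The main obstacle I anticipate is bookkeeping in the change of variables: keeping the orientation of the curve consistent (excess increasing while deficit decreases), correctly handling the sign introduced by the decreasing map $\delta$ when the integration limits are flipped, and reconciling the notation, since $\delta^{-1}(d)$ in the statement feeds an \emph{excess} argument into $p_X$ rather than a scale. A secondary point worth stating explicitly is the apparent tension that $D$ is a deterministic function of $X$ (both being monotone images of the same $K$), yet a nontrivial density $q$ appears; the resolution is that the area integrates $\delta$ against $d\xi$ with uniform Lebesgue weight rather than against $p_X\,d\xi$, and it is exactly this reweighting by $1/p_X$ that manufactures the density $q$ out of $p_D$.
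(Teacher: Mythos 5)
Your proof is correct and lands exactly on the paper's pivotal integral, $\int_0^{\infty} d\cdot\frac{p_D(d)}{p_X(\delta^{-1}(d))}\,dd$ (Eq.~(\ref{Eq:EDAUC4})), followed by the same normalization by $Q$; the difference is the intermediate parametrization. The paper parametrizes the curve by the miss rate $r$, writing the area as $\int_1^0 \delta(\rho_X^{-1}(r))[\rho_X^{-1}]'(r)\,dr$ and invoking the identity $[\rho_X^{-1}]'(r)=-1/p_X(x)$, which secretly rests on Lemma~\ref{Lemma:1}: the miss rate is the survival function of the excess variable $X$ (and, for the second substitution $r=\rho_D(d)$, the CDF of the deficit variable $D$). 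You instead parametrize by the scale $k$ and route both Jacobians through the critical-scale density $p_K$, via the standard monotone-transformation identities $\xi'(k)=p_K(k)/p_X(\xi(k))$ and $p_D(d)=p_K(k)/|\delta'(k)|$. The two computations are equivalent, but yours is somewhat more elementary and self-contained: it needs no survival-function interpretation of either axis, it makes explicit where the $1/p_X$ reweighting comes from, and your closing observation resolving the apparent paradox (that $D$ is a deterministic function of $X$ yet a nontrivial density $q$ emerges, because the area integrates $\delta$ against Lebesgue measure in $\xi$ rather than against $p_X\,d\xi$) is a genuinely clarifying addition the paper does not state. What the paper's route buys in exchange is reuse of machinery already established for Proposition~\ref{Prop:missrate} and no need to posit a density for $K$ (though under the proposition's differentiability and invertibility assumptions, which you correctly lean on, $p_K$ exists, so this costs you nothing). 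You also correctly flag and resolve the notational overloading of $\delta^{-1}$, which in the statement must be read as the deficit-to-excess map $\xi\circ\delta^{-1}$, matching the paper's usage where $\delta$ is treated as a function of excess in the proof. Two minor quibbles: the monotone decrease of the deficit in $k$ is not covered by the paper's Remark (which addresses only bandwidth and excess), so it deserves the one-line justification that widening bands can only shrink the shortfall; and your final claim that Corollary~\ref{Corollary1} is ``recovered as the degenerate case'' is loose --- that corollary concerns excess--miss rate coordinates and follows from the tail-expectation argument of Proposition~\ref{Prop:missrate}, not as a limiting instance of this computation. Neither affects the validity of the proof.
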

In this case, the interpretation involves an expectation of the deficit metric
proportional to a density {\em ratio} of the deficit and the excess. 

\begin{proof}
    Let the excess variable, $x=X$, be associated with the abscissa, 
    and $\delta(x)$ be the deficit function of the excess on the ordinate axis. 
    The AUUCC is 
    \begin{equation}
        \int_0^\infty \delta(x) dx.\label{Eq:EDAUC}
    \end{equation}
    Now consider the UCC a parametric curve parametrized by the miss rate, $r\in[0,1]$. 
    Let $r=\rho_X(x)$ and $r = \rho_D(d)$ where $\rho_{X,D}$ denotes a miss rate function of the excess and deficit, respectively. 
    Then $x=\rho_X^{-1}(r)$ and $d = \rho_D^{-1}(r)$, and Eq. (\ref{Eq:EDAUC}) can be rewritten as 
    \begin{equation}
        \int_1^0 \delta(\rho_X^{-1}(r))[\rho_X^{-1}]'(r)dr \label{Eq:EDAUC2}
    \end{equation}
    It is easy to show that $\frac{d}{dr}\left[\rho_X^{-1}(r)\right]=-\frac{1}{p_X(x)}$, where $p_X>0$ refers to the excess density. Hence (\ref{Eq:EDAUC2}) becomes
    \begin{equation}
        \int_0^1 \delta(\rho_X^{-1}(r)) \frac{1}{p_X(\rho_X^{-1}(r))} dr \label{Eq:EDAUC3}.
    \end{equation}
    After applying a variable change $r=\rho_D(d)$, Eq. (\ref{Eq:EDAUC3}) becomes 
    \begin{equation}
        \int_0^{\infty} d\cdot\frac{p_D(d)}{p_X(\delta^{-1}(d))}dd\label{Eq:EDAUC4}
    \end{equation}
    where $p_D$ refers to the deficit density. 
    We normalize the density ratio in Eq. (\ref{Eq:EDAUC4}) to obtain 
    \begin{equation}
        AUC = Q\cdot\int_0^{\infty}d\cdot q(d)dd\enspace \propto \left<D\right>_q \label{Eq:EDAUC5}
    \end{equation}
    whereby $q(d):=\frac{p_D(d)}{p_X(\delta^{-1}(d))}/Q$ and $Q=\int_0^{\infty}\frac{p_D(d)}{p_X(\delta^{-1}(d))}dd$.
    Thus, the Eq. (\ref{Eq:EDAUC5}) shows the AUUCC is proportional to the expected deficit with respect to the  distribution, $q$.
    Using the Riemann sum argument, similar to one in the proof to Proposition \ref{Prop:missrate}, 
    it is straight-forward to show that the empirical AUUCC is an estimator for 
    (\ref{Eq:EDAUC5}) up to the constant $Q$.
\end{proof}

In the case of Proposition \ref{Prop:ExDef}, the interpretation involves again an expectation of one of the axes' metrics, namely the deficit, however,
with respect to a distribution of a density ratio between the deficit and the excess. Similar to the previous
result, a smaller AUUCC relates to a smaller deficit average with respect to the density, $q$.
One example of such average being small would be a case where the mode of $p_D$ lies near zero deficit and 
the corresponding $p_X$ is small there, with its mode residing at higher deficits, thus concentrating the mass of $q$ around small deficit values.

Exploiting these results in the {\em optimization} of models to produce better prediction intervals appears an interesting avenue for future work.

\subsection{Special Cases of the Linear Cost Function}

\begin{remark}\label{Rem:MAE}
    Let $d_i=|\hat{y}_i - y_i|$. Given a scale $k$, and symmetric prediction bands $\hat{z}_i:=\hat{z}_i^l = \hat{z}_i^u$, the linear cost (\ref{Eq:CostFunction}) with $c=0.5$ at any operating point $k$ on the excess-deficit coordinate system corresponds to half of the mean absolute error (MAE) between the
    absolute difference and the scaled band:
        $MAE(k) = \frac{1}{N}\sum_i |d_i-k\cdot z_i|$.
\end{remark}
\begin{remark}\label{Rem:IntervalScore}
    For the choice of $f_1=\hat{\beta}, f_2=\hat{\delta}$ and $c=\frac{1}{\alpha+1}$ with $\alpha\in[0,1]$ denoting the confidence level, the symmetric cost (\ref{Eq:CostFunction}) corresponds to the well-known Interval Score (see \cite{Gneiting2007}, Section 6.2), up to a scale $\frac{\alpha+1}{\alpha}$.
    
\end{remark}

\section{Model Configurations}
The Gradient Boosting Regressors (GBR) used in Sections \ref{Sec:SyntheticData} and 
\ref{Sec:RealWorldDatasets} were trained with hyperparameter values listed 
in Table \ref{App:Tab:HyperparametersGBR}. No tuning was performed as we adopted the 
values from \cite{SineExampleScikit}. The "GBR-Weak" was set up by reducing the tree depth
to 3, and the number of trees (estimators) to 50. 

The LSTM configuration details are listed in the Appendix of \cite{ICMLAnon} - a pdf version of which is included in the Supplementary Material.

In addition, notebooks with code to reproduce all UCC results shown 
in Section \ref{Sec:SyntheticData}, \ref{Sec:RealWorldDatasets}, and this 
Appendix, are also included in the Supplementary Material. 

\label{Sec:Configurations}
\begin{table}[h]
\centering
\caption{GBR hyperparameter settings as used to produce results in 
Section \ref{Sec:SyntheticData} and \ref{Sec:RealWorldDatasets}}.
\vspace{0.15in}
\label{App:Tab:HyperparametersGBR}
\begin{tabular}{lcc}
\toprule
{\bf Hyper-}       & {\bf Where}     & {\bf Value} \\
{\bf parameter}    & {\bf used } & {\bf             } \\
\hline
Num. of estimators & GBR       & 500  \\
Max. tree depth & GBR       & 10  \\
Subsample fraction    & both      & 0.7    \\
Num. of estimators & GBR-Weak       & 50  \\
Max. tree depth & GBR-Weak       & 3  \\
Random seed    & both      & 42   \\
Learning rate    & both      & 0.1   \\
Min. samples per leaf    & both      & 9   \\

\bottomrule \\
\end{tabular}
\end{table}

\paragraph{BNN Setup for the Introductory Example}
We used a BNN with a single hidden layer, ReLU activations, and a hundred hidden units.
Synthetic data details:  Targets are generated as $y = 0.1x^3 + \epsilon$, where 
$\epsilon \sim \mathcal{N} (0, 0.25)$. Evaluated on 70 training inputs  uniformly sampled from 
$[-3.1,-1]\cup[1,3.1]$ and 154 test inputs uniformly sampled from $[-3.1,3.1]$.
Variational inference (VI): We used doubly reparameterized variational inference with the local reparameterization trick~\cite{kingma2015variational}. We used twenty MonteCarlo samples for computing the stochastic gradients of the evidence lower bound (ELBO). We used five random-restarts and selected the solution with the highest ELBO. 
Hamiltonian Monte Carlo (HMC): We used HMC with the leapfrog integrator. We sampled the momentum variable from $\mathcal{N}(0, \mathbf{I})$, and used $L = 100$ leapfrog steps. We used 50K iterations and a burnin of 40K and a thinning of interval twenty following the settings described in previous work~\cite{yao2019quality}. We used a fixed step size of $5 \times 10^{-4}$.
Test log likelihood evaluation: For both HMC and VI we used $500$ samples from the respective approximations to evaluate the test log-likelihood. Test log-likelihood is defined as:
$$
\mathbb{E}_{(x_n, y_n)\sim D}[\mathbb{E}_{q(W)}[p(y_n \mid x_n, W)]]
$$

\section{Additional Experimental Results}
\label{App:FullResults}
\subsection{Synthetic Data}

The data used to produce the results in Section \ref{Sec:SyntheticData} are generated using 
a function $x\sin x$. The training data is created by sampling this function and adding a gaussian 
noise with randomly varying variance, as in \cite{SineExampleScikit}. We set the range to be
$x\in[0,20]$. A total of 4000 training samples were used to create the GBR models, and 1000 equidistant, non-noisy samples 
sweeping the entire range of $x$ were used for evaluation. Additional 1000 held-out set samples were 
generated for tuning the operating points. The synthetic data and the GBR model predictions 
are shown in Figure \ref{Fig:App:SyntheticData}. The ground truth in the plot is the function $x\sin x$. 
A subset of the noisy training samples, the GBR target predictions, and the 
GBR prediction intervals are shown. The prediction intervals are tuned to contain, on average, 95\% 
of the ground truth. 
    \begin{figure}[htb] 
          \centering
          \includegraphics[width=8cm]{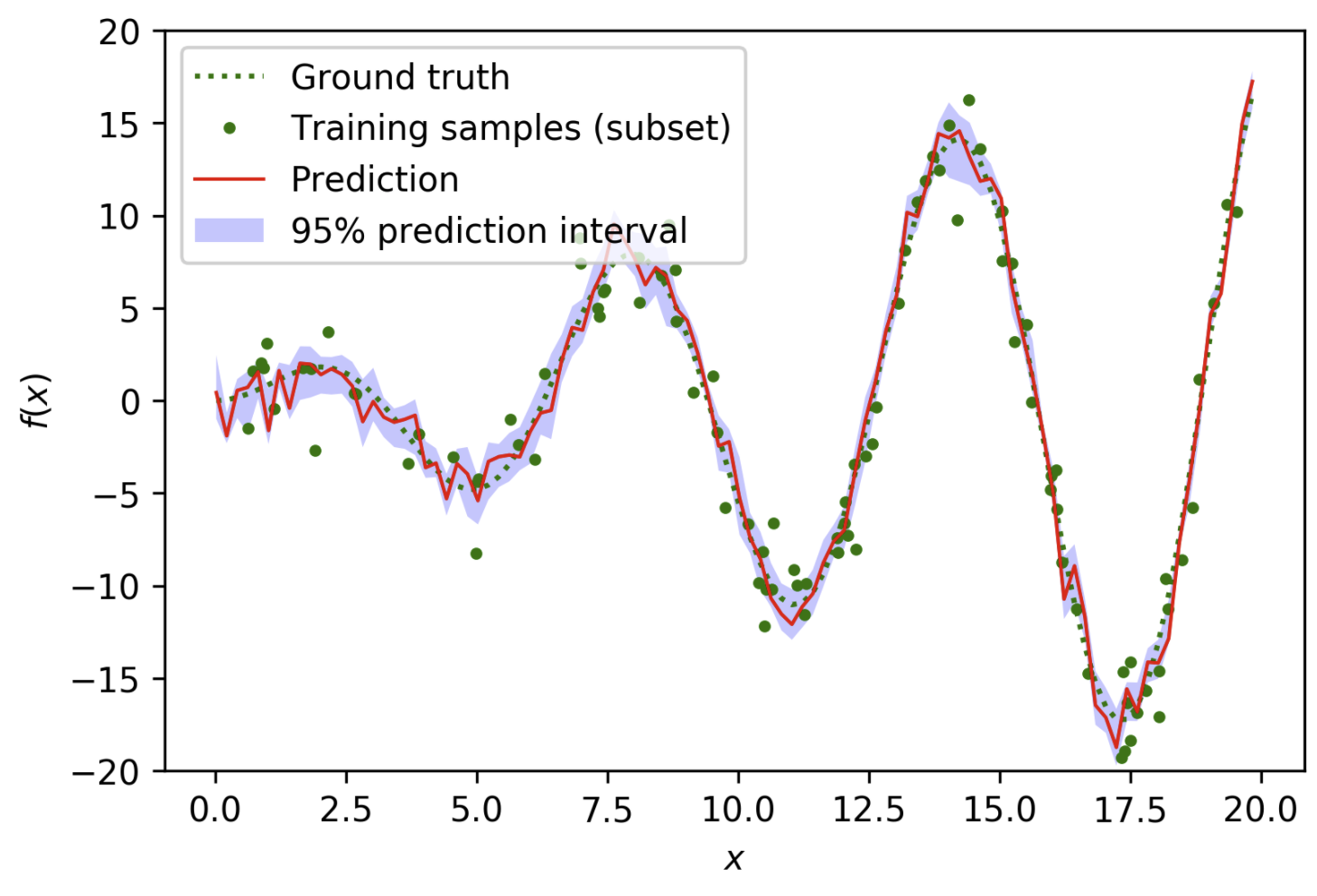}
          \caption{The synthetic dataset: a function $x\sin x$ is sampled and added noise producing training samples. The latter are used to train the GBR target and bound predictors. Also shown is the ground truth (the function $x\sin x$ without the noise), the target prediction as well as the prediction intervals.
          The ground truth is expected to lie within the bound about 95\% of the time. }
          \label{Fig:App:SyntheticData}
    \end{figure}
    \begin{table*}[ht]
        \centering    
        \caption{Summary metrics obtained on the synthetic dataset}
        \label{Tab:App:SynthDatasetTable}
        \begin{tabular}{lrrrrrrr}
        \toprule
              {\bf Model} &\multicolumn{3}{c}{{\bf Excess-Deficit}}&\multicolumn{3}{c}{{\bf Bandwidth-Miss rate}}\\
                {} &  {\bf AUUCC} &  {\bf Cost} &  {\bf Opt. Cost} &  {\bf AUUCC} &  {\bf Cost} &  {\bf Opt. Cost} &   {\bf MAE} \\
        \midrule
        GBR         &  0.138 & 0.308 &    0.099 &  0.645 & 0.318 &    0.178 & 0.750 \\
        Constant    &  0.209 & 0.287 &    0.136 &  0.819 & 0.371 &    0.245 & 0.960 \\
        GBR-Weak    &  0.311 & 0.377 &    0.182 &  0.939 & 0.405 &    0.292 & 1.284 \\
        Random      &  0.412 & 0.329 &    0.208 &  1.108 & 0.425 &    0.333 & 1.128 \\
        Eps-Perfect &  0.001 & 0.002 &    0.002 &  0.779 & 0.109 &    0.085 & 0.022 \\
        \bottomrule
        \end{tabular}
    \end{table*}
    
    The resulting UCCs are shown in the Figure \ref{Fig:SyntDatasetUCC} in Section \ref{Sec:SyntheticData} of the main paper. Here, summary metrics for the same experiment are shown in Table \ref{Tab:App:SynthDatasetTable}.
    These include the AUUCC, the Cost, the Optimum Cost, and the Mean Absolute Error (MAE). While the Cost 
    is calculated at the operating point (OP) determined on the held-out dataset, the Minimum Cost 
    is calculated on the test data themselves, thus representing a minimum achievable cost. The MAE 
    is defined in the Remark in Section \ref{Rem:MAE}. All values in Table \ref{Tab:App:SynthDatasetTable}
    are consistent with the overall model ranking apparent in the Figure \ref{Fig:SyntDatasetUCC}.
    The rather large gap between the actual and minimum cost can be attributed to the fact that the 
    held-out data set contains noise, while the test set does not. 
    This gap disappears when using a held-out set without noise.

\subsection{Real-World Datasets}

Figure \ref{Fig:HousingUCC} shows the UCCs on excess-deficit (left) and bandwidth-miss rate (right) 
coordinates. Note that here we apply axes normalization
to standard deviation units. 
Comparing the predictors across the two coordinate systems reveals interesting differences. While there is a clear separation of all the curves on the bandwidth-miss rate plot, the gap narrows for all but the weak GBR. For example, at one standard deviation of bandwidth there is a 5\% difference in miss rate between the GBR and the Weak GBR predictor - a difference comparable to one between GBR and the Meta predictor. However, when the {\em extent} of missing a target is taken into account (in the Excess-Deficit plot), the Weak GBR fares significantly worse than the GBR by same comparison, indicating the Weak GBR tends to miss targets by a greater extent.  
\begin{figure*}[htb] 
      \centering
      \includegraphics[height=5cm]{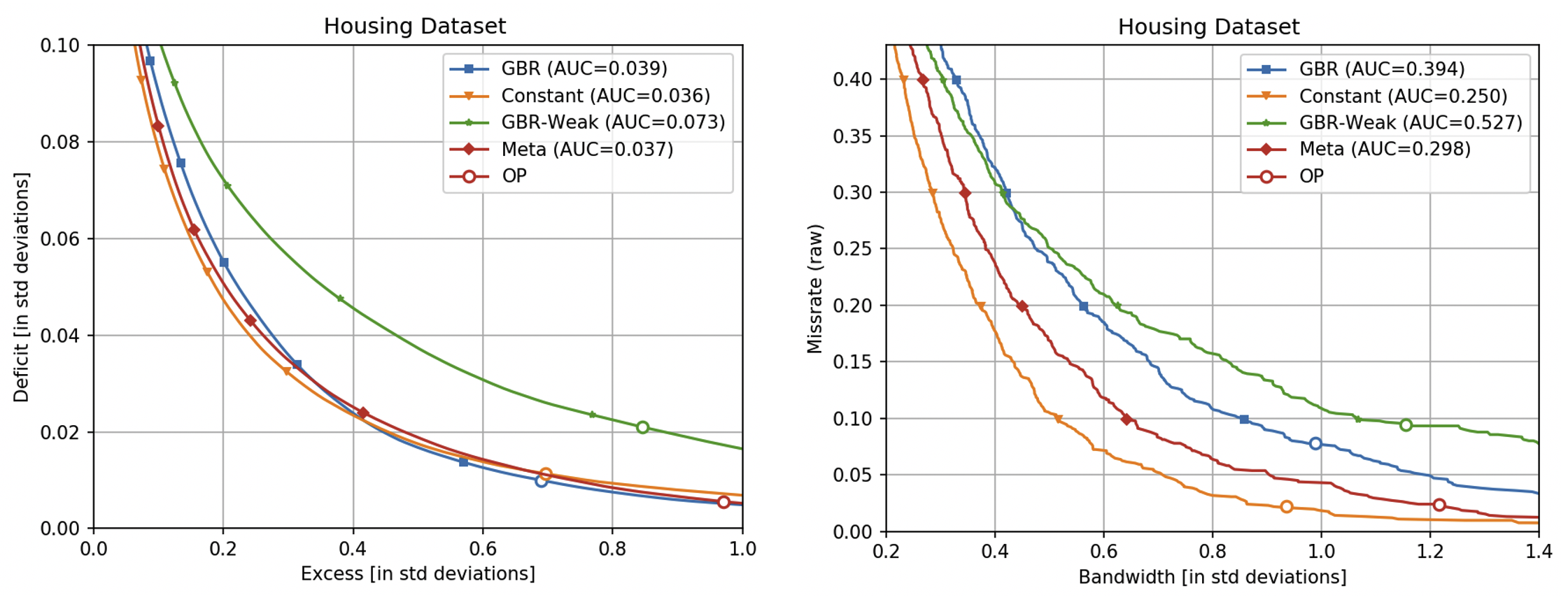}
      \caption{The UCCs on excess-deficit (left) and bandwidth-miss rate (right) coordinates for the Boston Housing data set.}
      \label{Fig:HousingUCC}
\end{figure*}
Furthermore, a curve cross-over can be seen for the GBR and the Meta model on the excess-deficit plot, comparable
to that observed on the Wine dataset (see below). This pattern is not present on the bandwidth-miss rate coordinate system - an indication that the choice of metrics is
not without consequences and should be made carefully with the eventual application in mind. 
   

\subsection{Wine Quality Data}
    We used the White Wine portion of the collection (comprising White and Red), consisting of 
    4897 samples. Furthermore, to obtain more robust results, we applied a 5-fold cross-validation 
    scheme to generate our measurements. 

    The UCC for both the excess-deficit and the bandwidth-miss rate coordinates are shown in 
    Figure \ref{Fig:App:WinesUCC}. Interestingly, there are a few significant differences between 
    the two plots: while the Meta model performs best in both coordinate systems, the GBR model falls behind the 
    constant when bandwidth and miss rate are measured. Recall that the miss rate is insensitive to 
    the extent of prediction interval excess, while the excess metric captures this. Furthermore, 
    the GBR-Weak model shows a dramatic drop at a bandwidth of about 1.3 standard deviations. 
    A further investigation revealed that the weak GBR model's predictions are almost constant
    (with a few exceptions). This combined with the fact that the wine ratings are whole numbers 
    between 1 and 10, but mostly concentrating between 5 and 7 (mode at 6), 
    there is a distinct  OP that just captures a large portion of the ground truth ratings at once. 
    This becomes visible as a jump in the UCC when miss rate is one of the coordinates. 
    
    The summary metrics are shown in Table \ref{Tab:App:WinesTable}. Significantly smaller gaps 
    can be seen between the Cost and the Opt. Cost, as compared to the synthetic data (as discussed above). 
    While most summary metrics reflect the model ranking consistently, one outlier is the 
    optimum cost for the GBR-Weak model on the bandwidth-miss rate coordinate system: based on this metric
    the GBR-Weak model might be considered the superior choice. A look at the corresponding UCCs 
    (Figure \ref{Fig:App:WinesUCC}), however, is revealing: the GBR-Weak model exhibit a dramatic 
    drop in miss rate around the bandwidth of 1.3, to the degree that it briefly dips below all other 
    curves, thus bringing the minimum achievable cost to 0.187. However, from the UCC comparison
    it is clear that such a choice may not be the most robust one. 
    
    \begin{figure*}[htb] 
          \centering
          \includegraphics[height=6cm]{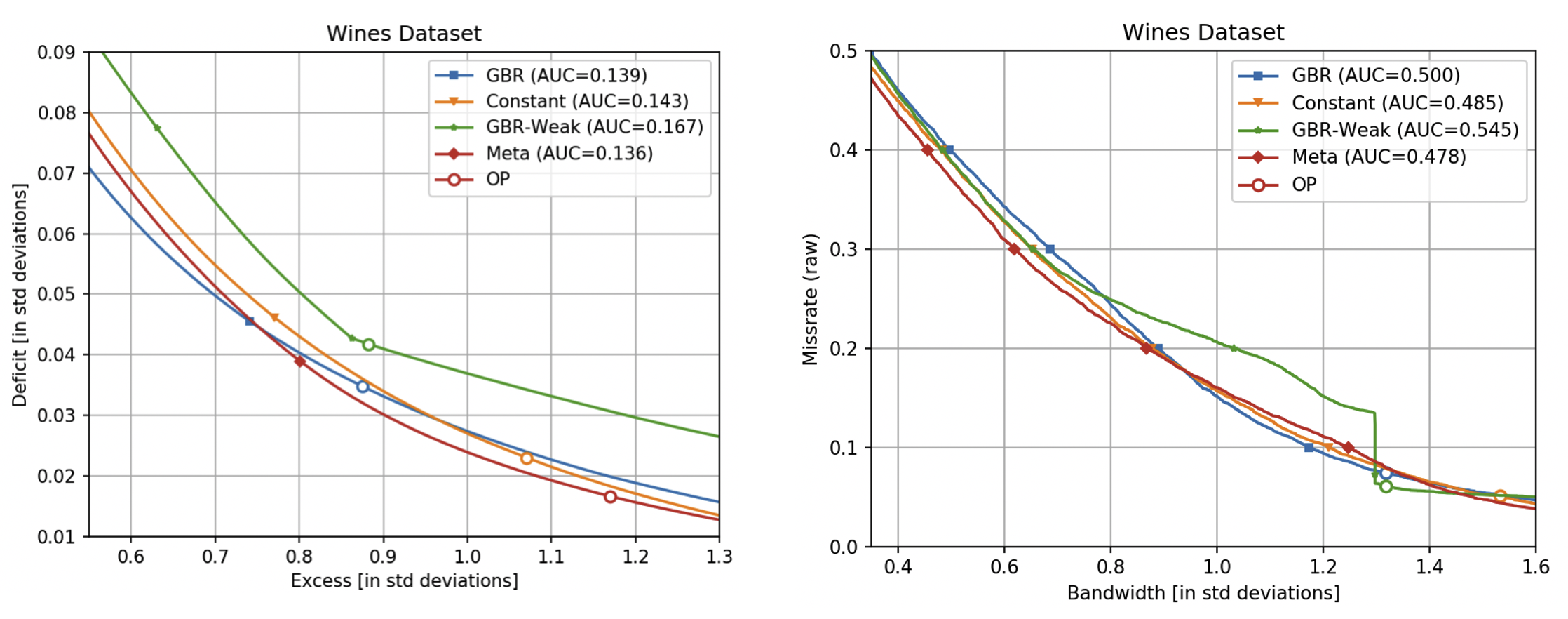}
          \caption{The UCCs on excess-deficit (left) and bandwidth-miss rate (right) coordinates for the Wine Quality dataset.}
          \label{Fig:App:WinesUCC}
    \end{figure*}
    \begin{table*}[ht]
            \centering    
            \caption{Summary metrics obtained on the Wine Quality dataset}
            \label{Tab:App:WinesTable}
            \begin{tabular}{lrrrrrrr}
            \toprule
              {\bf Model} &\multicolumn{3}{c}{{\bf Excess-Deficit}}&\multicolumn{3}{c}{{\bf Bandwidth-Miss rate}}\\
                {} &  {\bf AUUCC} &  {\bf Cost} &  {\bf Opt. Cost} &  {\bf AUUCC} &  {\bf Cost} &  {\bf Opt. Cost} &   {\bf MAE} \\
                \midrule
                GBR      &  0.139 & 0.119 &    0.115 &  0.500 & 0.199 &    0.198 & 1.031 \\
                Constant &  0.143 & 0.128 &    0.119 &  0.485 & 0.199 &    0.197 & 1.082 \\
                GBR-Weak &  0.167 & 0.126 &    0.125 &  0.545 & 0.187 &    0.187 & 1.074 \\
                Meta     &  0.136 & 0.132 &    0.115 &  0.478 & 0.196 &    0.193 & 1.294 \\
                \bottomrule
            \end{tabular}
    \end{table*}

\subsection{Boston Housing Data}
    Due to the relatively small size of the Housing dataset, the following complexity-related parameters of the GBR and GBR-Weak
    models were adjusted (compared to values listed in the Table \ref{App:Tab:HyperparametersGBR}): (1) number of estimators (trees) set to 50 and 10 for GBR and GBR-Weak, respectively, (2) maximum tree depth set to 3.
    A 10-fold cross-validation (with 2 randomized repetitions) was applied to obtain more reliable 
    measurements. 
    
    The summary metrics are shown in Table \ref{Tab:App:HousingTable}. 
    
    \begin{table*}[ht]
            \centering    
            \caption{Summary metrics obtained on the Boston Housing data set}
            \label{Tab:App:HousingTable}
            \begin{tabular}{lrrrrrrr}
            \toprule
              {\bf Model} &\multicolumn{3}{c}{{\bf Excess-Deficit}}&\multicolumn{3}{c}{{\bf Bandwidth-Miss rate}}\\
                {} &  {\bf AUUCC} &  {\bf Cost} &  {\bf Opt. Cost} &  {\bf AUUCC} &  {\bf Cost} &  {\bf Opt. Cost} &   {\bf MAE} \\
                \midrule
                GBR      &  0.039 & 0.078 &    0.061 &  0.394 & 0.169 &    0.162 & 104.762 \\
                Constant &  0.036 & 0.080 &    0.059 &  0.250 & 0.113 &    0.108 &  45.962 \\
                GBR-Weak &  0.073 & 0.103 &    0.080 &  0.527 & 0.200 &    0.196 & 112.331 \\
                Meta     &  0.037 & 0.102 &    0.061 &  0.298 & 0.143 &    0.133 & 113.851 \\
                \bottomrule
            \end{tabular}
    \end{table*}

\subsection{Traffic Volume Data}

    The full set of UCCs using both coordinate systems is shown in Figure \ref{Fig:App:TrafficUCC}.
    \begin{figure*}[htb] 
          \centering
          \includegraphics[width=13cm]{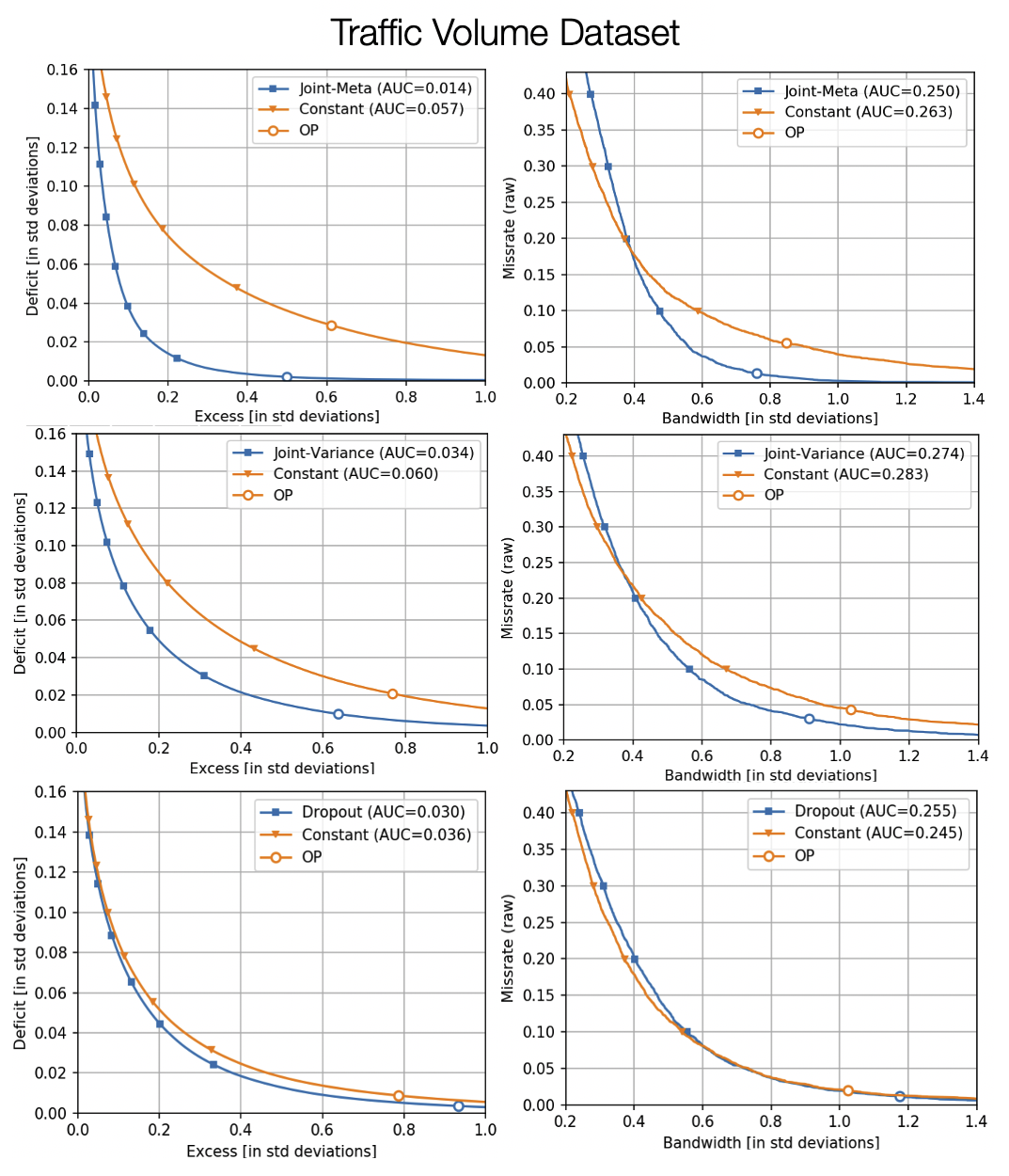}
          \caption{The UCCs on excess-deficit (left) and bandwidth-miss rate (right) coordinates for the three models (Joint-Meta, Joint-Variance, Dropout) and their respective constant baselines.}
          \label{Fig:App:TrafficUCC}
    \end{figure*}
    
    The findings from Figure \ref{Fig:App:TrafficUCC} are also reflected in the gains (see Table \ref{Tab:TrafficDatasetTable}) with the JMS 
    achieving an overall improvement of 75.5\% and 59.3\% in AUUCC ($G_{AUC}$) and optimum cost ($G_{C^*}$) over its constant
    reference. In contrast, the DOMS model obtains rel. smaller positive gains for AUUCC and the minimum cost, but negative gains for the original-calibration cost ($G_C$) and the MAE, indicating the original calibration present in the input data was suboptimal. Recall that while $G_{C}$ and $G_{MAE}$
    are determined using the original calibration, which is tuned to minimize the cost using a held-out set of training data, the gain $G_{C^*}$ is determined on the best calibration achievable using the visualized data.
    The AUUCC values in each graph are significantly different at $p<0.01$ based on the pairwise permutation test. 
    \begin{table*}[htb!]
    
        \centering    
        \caption{The sequential Traffic Volume task. Summary metrics shown as percentage gains over the corresponding constant-band baselines.}
        \label{Tab:TrafficDatasetTable}
        \begin{tabular}{lrrrr}
            \toprule
           {\bf Model} &\multicolumn{3}{c}{{\bf Excess-Deficit}}&\\
            {} &  { \% $G_{AUC}$} &  { \% $G_{C}$} &  { \% $G_{C^*}$} &  { \% $G_{MAE}$}\\        
            \midrule
            JMS &     75.5 &    40.5 &       59.3 &      21.8 \\
            JMV &     42.7 &    24.1 &       30.4 &   18.2 \\
            DOMS &     15.5 &   -11.5 &       10.0 & -17.8 \\
            \bottomrule
        \end{tabular}    
    \end{table*}

    In addition to the discussion in Section \ref{Sec:RealWorldDatasets}, we observe that an interesting 
    cross-over pattern occurs in the bandwidth-miss rate UCC plots showing the constant band outperforming 
    the relatively complex sequential models (Joint-Meta and Joint-Variance) in a high miss rate operating 
    range. As the operating miss rate gets lower (below 0.2), the model-based prediction intervals 
    begin gaining over the constant baseline significantly. This pattern does not show up on the 
    excess-deficit plots and seems to be induced, again, by the miss rate metrics ignoring the degree of excess, 
    as discussed earlier.


\end{document}